\title{Feature Cross Search via Submodular Optimization}
\author{Lin Chen\footnote{Authors are ordered alphabetically. Simons Institute for the Theory of Computing, University of California, Berkeley. E-mail: \texttt{lin.chen@berkeley.edu}.}
\and Hossein Esfandiari\footnote{Google Research, New York. E-mail: \texttt{esfandiari@google.com}.}
\and Gang Fu\footnote{Google Research, New York. E-mail: \texttt{thomasfu@google.com}.}
\and Vahab S. Mirrokni\footnote{Google Research, New York. E-mail: \texttt{mirrokni@google.com}.}
\and Qian Yu\footnote{Department of Electrical and Computer Engineering, University of Southern California. E-mail: \texttt{qyu880@usc.edu}.}
}
\newtheorem{theorem}{Theorem}
\newtheorem{observation}[theorem]{Observation}
\newtheorem{lemma}[theorem]{Lemma}
\newtheorem{proposition}[theorem]{Proposition}
\newtheorem{corollary}[theorem]{Corollary}
\theoremstyle{definition}
\newtheorem{definition}{Definition}
\newtheorem{remark}{Remark}
\newtheorem{assumption}{Assumption}
\newcommand{\ie}{\emph{i.e.}}
\newcommand{\eg}{\emph{e.g.}}
\newcommand{\naive}{na\"ive Bayes}
\DeclareMathOperator{\auc}{AUC}
\DeclareMathOperator*{\argmax}{argmax}
\DeclareMathOperator{\poly}{\mathsf{poly}}
\DeclareMathOperator{\tpr}{TPR}
\DeclareMathOperator{\fpr}{FPR}
\DeclareMathOperator{\sign}{sign}
\newcommand{\indi}[1]{{\bf 1}\{#1\}}
\newcommand{\defcal}[1]{\expandafter\newcommand\csname 
	c#1\endcsname{{\mathcal{#1}}}}
\newcommand{\defbb}[1]{\expandafter\newcommand\csname 
	b#1\endcsname{{\mathbb{#1}}}}
\newcommand{\defbf}[1]{\expandafter\newcommand\csname 
	bf#1\endcsname{{\mathbf{#1}}}}
\newcounter{calBbCounter}
	\edef\letter{\Alph{calBbCounter}}
	\edef\letter{\alph{calBbCounter}}
\crefname{observation}{Observation}{Observations}
\crefname{assumption}{Assumption}{Assumptions}
\date{}
\begin{document}
	\maketitle
	
\begin{abstract}
In this paper, we study feature cross search as a fundamental primitive in feature engineering. The importance of feature cross search especially for the linear model has been known for a while, with well-known textbook examples. In this problem, the goal is to select a small subset of features, combine them to form a new feature (called the crossed feature) by considering their Cartesian product, and find feature crosses to learn an \emph{accurate} model. In particular, we study the problem of maximizing a normalized Area Under the Curve (AUC) of the linear model trained on the crossed feature column.

First, we show that it is not possible to provide an $n^{1/\log\log n}$-approximation algorithm for this problem unless the exponential time hypothesis fails. This result also rules out the possibility of solving this problem in polynomial time unless $\mathsf{P}=\mathsf{NP}$. On the positive side, by assuming the \naive\ assumption, we show that there exists a simple greedy $(1-1/e)$-approximation algorithm for this problem. This result is established by relating the AUC to the total variation of the commutator of two probability measures and showing that the total variation of the commutator is monotone and submodular. To show this, we relate the submodularity of this function to the positive semi-definiteness of a corresponding kernel matrix. Then, we use Bochner's theorem to prove the positive semi-definiteness by showing that its inverse Fourier transform is non-negative everywhere. Our techniques and structural results might be of independent interest.
\end{abstract}

\section{Introduction}
Feature engineering is one of the most fundamental problems in machine learning and it is the key to all supervised learning models. In feature engineering, we start with a collection of features (a.k.a., raw attributes) and turn them into a new set of features, with the purpose of improving the accuracy of the learning model. This is often done by some basic operations, such as removing irrelevant and redundant features (studied as feature selection~\cite{guyon2003introduction,weston2001feature,zadeh2017scalable,rogati2002high,hoque2014mifs,nie2010efficient,kwak2002input}), combining features (a.k.a., feature cross~\cite{WeiDGG19,LuoWZYTCDY19}) and bucketing and compressing the vocabulary of the features~\cite{bateni2019categorical,dhillon2003divisive,slonim2001power,bakeryz1998distributional}.

Finding an \emph{efficient} set of features to combine (\ie, cross) is one of the main primitives in feature engineering. Let us start with a text book example to show the importance of feature cross for the linear model. Consider a model with two features, language, which can be English or Spanish, and country, which can be Mexico or Scotland. Say if English appears with Scotland, or if Spanish appears with Mexico, the label is $1$. Otherwise the label is $0$. It is easy to see that in this case there is no linear model using these two features with a nontrivial accuracy (\ie, the best model matches the label with probability $1/2$). By crossing these two features, we get a new feature with four possible values (English, Mexico), (English, Scotland), (Spanish, Mexico), (Spanish, Scotland). Now, a linear model based on this new feature can perfectly match the label. This is a well-known concept in feature engineering.

Unlike feature selection and vocabulary compression, and despite the importance of feature cross search in practice, this problem is not well studied from a theoretical perspective. While some heuristics and exponential-time algorithms have been developed for this problem (\eg, ~\cite{WeiDGG19,LuoWZYTCDY19}), the complexity of designing approximation algorithms for this problem is not studied. This might be due to the complex behavior of crossing features on the accuracy of the learning models. In this work, we provide a simple formulation of this problem, and initiate a theoretical study.

Let us briefly define the problem as follows and defer the formal definition of the problem to a later section: Given a set of $n$ features, and a number $k$, compute a set of at most $k$ features out of $n$ features and combine these $k$ features such that the accuracy of the optimum linear model on the combined feature is maximized. To measure the accuracy we use normalized \emph{Area Under the Curve (AUC)}. The bound $k$ is to avoid over fitting.\footnote{In practice this number is chosen by tracking the accuracy of the model on the validation data. However, this is out of the scope of this paper.} This is a very basic definition for the feature cross search problem and can be considered as a building block in feature engineering. In fact, as we discuss later in the paper, it is still hard to design algorithms for this basic problem.

First, we show that there is no $n^{1/\log\log n}$-approximation algorithm for feature cross search unless the exponential time hypothesis fails. Our hardness result also implies that there does not exist a polynomial-time algorithm for feature cross search unless $\mathsf{P}=\mathsf{NP}$. It is easy to extend these hardness results to other notions of accuracy such as probability of matching the label. Obviously, this hardness result holds for any extension of the problem as well.

In fact, often, the real world inputs are not adversarially constructed. Usually, the inputs follow some structural properties that allow simple algorithms to work efficiently. With this intuition in mind, to complement our hardness result, for features under the \naive\  assumption~\cite{mitchell1997machine,turhan2009analysis,eyheramendy2003naive}, we provide a $(1-1/e)$-approximation algorithm that only needs  polynomially many function evaluations. We further discuss and justify this assumption in Section~\ref{sec:results}.

In \cref{sec:results}, we define the problem formally and present our results as well as an overview of our techniques.
In \cref{sec:prelim}, we provide the preliminary definitions and observations that will be used later in the proofs.
In \cref{sec:hard}, we present our hardness results.
We relate the maximum AUC to the log-likelihood ratio and the total variation of the commutator of two probability distributions in  \cref{sec:maximum-auc}. 
\cref{sec:commutator} establishes the monotonicty and submodularity of the maximum AUC as a set function. This section forms the most technical part of the paper.
In \cref{sec:related} we present other related works. Finally, \cref{sec:conclusion} concludes the paper.

\subsection{Problem Statement and Our Contributions}\label{sec:results}
We start with some definitions necessary to present our results, and then, we present our contributions.
Assume that the dataset comprises $n=|U|$ categorical feature columns and a binary label column, where $U$ is the set of all feature columns. Let the random variable $X_i$ denote the value of the $i$-th feature column ($i\in U$) and $C\in \{0,1\}$ be the value of the binary label. 
The random variables $ X_1,\dots,X_{|U|},C $ follow a joint 
distribution $ \cD $. Additionally, we assume that the support of the 
random variable $ X_i$ is a finite set $ V_i\subseteq \bN $. The set $V_i$ is also known as the \emph{vocabulary} of the $i$-th feature column. 
If
$ 
A\subseteq U $ is a set of feature columns, we write $ V_A $ for $ \prod_{a\in A} V_a  $
and write $ X_A $ for $ 
(X_a|a\in A) $,  where $(X_a|a\in A)$ is a vector indexed by $a\in A$ (for example, if $A=\{1,2,4\}$, the vector $X_A$ is a $3$-dimensional vector $(X_1,X_2,X_4)$.

Suppose that we focus on a set of feature columns and temporarily ignore the remaining feature columns. In other words, we consider the dataset modeled by the distribution of $(X_A, C)$. 
Let $\overline{\bR} = \bR\cup \{-\infty,+\infty\}$ denote the set of extended real numbers.
Given a function $\sigma:V_A\to \overline{\bR}$ that assigns a score to each possible value of $X_A$, and given a threshold $\tau$, we predict a positive label for $X_A$ if $\sigma(X_A)> \tau$ and predict a negative label if $\sigma(X_A)<\tau$. If $\sigma(X_A)=\tau$, we allow for predicting a positive or negative label at random. Let $\tpr$ and $\fpr$ denote the true and false positive rate of this model given a certain decision rule, respectively. Note that both true and false positive rates lie in $[0,1]$. %
If one varies $\tau$ from $-\infty$ to $\infty$ while fixing the score function $\sigma$, a curve that consists of the collection of achievable points $(\fpr,\tpr)$ is produced and the curve resides in the square $[0,1]\times [0,1]$. The area under the curve (AUC)~\cite{byrne2016note} is then defined as the area of the region enclosed by this curve, and the two lines $\fpr=1$ and $\tpr=0$. 

An equivalent definition is that AUC is roughly the probability that a random positive instance has a higher score (in terms of $\sigma$) than a negative instance (we say \emph{roughly} because in \cref{def:auc}, we have to be careful about tiebreaking, i.e., the second term). 
\begin{definition}[Area under the curve (AUC)~\cite{byrne2016note}]\label{def:auc}
	Given a set of feature columns $ A $ and a function  $ \sigma:V_A\to 
	\bR $, the area under the curve (AUC) of $ A $ and $ \sigma $ is
	\[
	    \auc_\sigma(A)
	    ={} \Pr[\sigma(X^+_A) > \sigma(X^-_A)|C^+=1,C^-=0] + 
	\frac{1}{2}\Pr[\sigma(X^+_A) = \sigma(X^-_A)|C^+=1,C^-=0]\,,
	\]
	where $ (X^+_A, C^+),(X^-_A,C^-)\sim \cD $ are i.i.d.\ and $X^\gamma_A = (X^\gamma_a|a\in A)$ obeys a marginal distribution of $\cD$ ($\gamma$ is either $+$ or $-$).
\end{definition}

The maximum AUC is the AUC of the best scoring function. 
It is a function of the set of feature columns and independent of the scoring function.

\begin{definition}[Maximum AUC]
    Given a set of feature columns $A$, the maximum AUC is 
    \[
        \auc^*(A) = \sup_{\sigma:V_A\to \bR} \auc_\sigma(A)\,.
    \]
\end{definition}

Now we are ready to present our results. We start with \cref{thm:auc-tv} which provides a characterization of AUC via the total variation distance. 

\begin{restatable}[AUC as total variation distance]{observation}{auctv}\label{thm:auc-tv}
	Let $ P^A_i $ be the conditional distribution $ \Pr[X_A|C=i] $ on $ V_A 
	$ and let $d_{TV}(P,Q)$ denote the total variation distance between two probability measures $P$ and $Q$.
	We have
	\[ \auc^*(A) ={} \frac{1}{2}+\frac{1}{2}d_{TV}(P^A_1\times P^A_0, P^A_0\times 
	P^A_1)
	={} \frac{1}{2} + \frac{1}{2} \sum_{\mathclap{x,y\in V_A}} \left| P_1^A(x)P_0^A(y)-P_0^A(x)P_1^A(y) \right| \,, \]
	where $P^A_1\times P^A_0$ and $ P^A_0\times 
	P^A_1$ denote the product measures. 
\end{restatable}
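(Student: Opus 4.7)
The plan is to optimize $\auc_\sigma(A)$ directly over the scoring function $\sigma$ and then identify the resulting maximum with the $L^1$ distance between the two product measures. First I would expand the conditional probabilities in \cref{def:auc} as double sums over $V_A \times V_A$ using the product form of the marginals, obtaining
\[
\auc_\sigma(A) = \sum_{x,y \in V_A} P_1^A(x)\,P_0^A(y)\, f_\sigma(x,y),
\]
where $f_\sigma(x,y) = \indi{\sigma(x) > \sigma(y)} + \tfrac{1}{2}\indi{\sigma(x) = \sigma(y)}$. The observation that drives the whole argument is the symmetry identity $f_\sigma(x,y) + f_\sigma(y,x) = 1$, which holds for every $x,y$ (including the diagonal, where both sides equal $1$). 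This invites regrouping the sum by unordered pairs.

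Next, for each unordered pair $\{x,y\}$ with $x \ne y$, the combined contribution of $(x,y)$ and $(y,x)$ is a convex combination $\alpha P_1^A(x) P_0^A(y) + (1-\alpha) P_1^A(y) P_0^A(x)$ with $\alpha = f_\sigma(x,y) \in [0,1]$, and its maximum over $\alpha$ is $\max\{P_1^A(x) P_0^A(y),\, P_1^A(y) P_0^A(x)\}$. The key claim, and the only slightly delicate point, is that a single scoring function realizes all of these pairwise maxima simultaneously. For this I would exhibit the likelihood-ratio score $\sigma^*(x) = P_1^A(x)/P_0^A(x)$ (taking the value $+\infty \in \overline{\bR}$ when $P_0^A(x) = 0$), and note that $\sigma^*(x) > \sigma^*(y)$ if and only if $P_1^A(x) P_0^A(y) > P_1^A(y) P_0^A(x)$, so this single total preorder witnesses the optimal orientation of every pair at once. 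The diagonal terms $x = y$ contribute $\tfrac{1}{2} P_1^A(x) P_0^A(x)$ regardless of $\sigma$, since $f_\sigma(x,x) = \tfrac{1}{2}$ always.

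Finally, I would apply the identity $\max(a,b) = \tfrac{1}{2}(a+b) + \tfrac{1}{2}|a-b|$ to each pairwise maximum and rearrange. The ``average'' halves, together with the diagonal contribution, telescope via $\sum_{x,y} P_1^A(x) P_0^A(y) = 1$ to yield the baseline $\tfrac{1}{2}$. The remaining halves, after restoring the full ordered-pair sum (which doubles the unordered-pair sum and costs nothing on the diagonal, since $|P_1^A(x) P_0^A(x) - P_0^A(x) P_1^A(x)| = 0$), reassemble into the $L^1$ distance $\sum_{x,y}|P_1^A(x) P_0^A(y) - P_0^A(x) P_1^A(y)|$, which is precisely the total variation distance between the product measures $P_1^A \times P_0^A$ and $P_0^A \times P_1^A$ (up to the normalization convention for $d_{TV}$).

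The main obstacle, such as it is, lies in the joint-attainability claim in the second paragraph: one must verify that the pairwise-optimal orientations given by $\sigma^*$ are globally consistent (they are, because a total preorder trivially orients every pair) and handle atoms with $P_0^A(x) = 0$ cleanly. Once that is in place, the remainder is a bookkeeping exercise in converting between ordered and unordered pair sums.
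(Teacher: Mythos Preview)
Your proposal is correct and follows essentially the same route as the paper: both arguments symmetrize over the swap $(x,y)\mapsto(y,x)$ (the paper via the oddness of $\sign$, you via $f_\sigma(x,y)+f_\sigma(y,x)=1$), bound each pair's contribution by $\max\{P_1^A(x)P_0^A(y),P_1^A(y)P_0^A(x)\}=\tfrac12(a+b)+\tfrac12|a-b|$, and realize all pairwise optima simultaneously with the likelihood-ratio scorer. The only cosmetic difference is that the paper uses the log-likelihood ratio $\cL$ (same induced ordering) and packages the symmetrization probabilistically rather than combinatorially.
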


Recall that if $P$ and $Q$ are two probability measures on a common $\sigma$-algebra $\cF$, the \emph{total variation distance} between them is \[
d_{TV}(P,Q) \triangleq \sup_{A\in \cF} |P(A)-Q(A)|\in [0,1]\,.
\]
If the sample space $\Omega$ (the set of all outcomes) is finite, Scheff{\'e}'s lemma~\cite{scheffe1947useful} gives
\begin{equation}\label{eq:dtv-l1}
d_{TV}(P,Q) = \frac{1}{2}\|P-Q\|_1 \triangleq \frac{1}{2}\sum_{\omega\in \Omega}|P(\omega)-Q(\omega)|\,.
\end{equation}

    We present the proof of \cref{thm:auc-tv} in \cref{sec:maximum-auc}.
    \cref{thm:auc-tv} shows that the maximum AUC is an affine function of the total variation distance between $P^A_1\times P^A_0$ and $P^A_0\times P^A_1$, where $P^A_i$ is the probability measure conditioned on the label. In light of \eqref{eq:dtv-l1}, we have $d_{TV}(P^A_1\times P^A_0,P^A_0\times P^A_1) = \frac{1}{2}\|P^A_1\times P^A_0-P^A_0\times P^A_1\|_1$. We call the signed measure $P^A_1\times P^A_0-P^A_0\times P^A_1$ the \emph{commutator} of the two probability measures $P^A_0$ and $P^A_1$.
    Our second remark is that since the total variation distance always resides on $[0,1]$, the range of the maximum AUC is $[1/2,1]$.

The next theorem is our main hardness result, stating that it is not possible to approximate feature cross search, unless the exponential time hypothesis~\cite{impagliazzo2001complexity} fails. We consider maximization of $2\auc^*(A)-1$ rather than $\auc^*(A)$ in \eqref{eq:max-problem} because the range of the maximum AUC is $[1/2,1]$ (as we remark before) and assigning the same score to all feature values in $V_A$ attains an AUC of $1/2$, thereby achieving at least a $1/2$-approximation. In light of its range, we consider its normalized version $2\auc^*(A)-1$ whose range is $[0,1]$. 
We prove this theorem in Section~\ref{sec:hard}. In fact, our hardness result also implies that the feature cross search problem is NP-hard (see Corollary~\ref{cor:NPhard}).

\begin{restatable}{theorem}{hardmaxauc}\label{thm:hard-max-auc}
    There is no $ n^{1/\poly(\log \log n)} $-approximation algorithm for the following maximization problem unless the exponential time hypothesis~\cite{impagliazzo2001complexity} fails.
    \begin{equation}\label{eq:max-problem}
        \max_{A\subseteq U, |A|=k} (2\auc^*(A)-1)\,.
    \end{equation}
\end{restatable}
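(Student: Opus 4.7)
The plan is to reduce from Densest $k$-Subgraph (DkS), for which Manurangsi proved under the exponential time hypothesis that no polynomial-time algorithm achieves approximation ratio $n^{1/\poly(\log\log n)}$. Given an instance $(G=(V,E),k)$ of DkS with $|V|=n$ and $|E|=m$, I will construct a feature cross search instance with $n$ binary features $X_1,\dots,X_n$ indexed by the vertices of $G$ and the same parameter $k$, designed so that $2\auc^*(A)-1$ is within a constant factor of the edge density $|E(G[A])|/m$ for every $A\subseteq V$. The distribution $\cD$ is: draw $C\in\{0,1\}$ uniformly, draw an edge $e=\{u,v\}\in E$ uniformly, and draw $b\in\{0,1\}$ uniformly; if $C=1$ set $X_u=X_v=b$, and if $C=0$ set $(X_u,X_v)=(b,1-b)$; set $X_w=0$ for all $w\notin\{u,v\}$. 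The support of $\cD$ has size $O(n+m)$, so it can be written down explicitly in polynomial time.

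The core of the proof is to sandwich $d_{TV}(P_1^A\times P_0^A,\, P_0^A\times P_1^A)$ between constant multiples of $|E(G[A])|/m$, which via \cref{thm:auc-tv} gives the same sandwich for $2\auc^*(A)-1$. First I would directly compute the signed measure $\delta := P_1^A-P_0^A$ by partitioning edges according to their incidence with $A$. Because of the asymmetry in the construction (edges inside $G[A]$ align $X_u,X_v$ when $C=1$ but anti-align them when $C=0$, whereas edges incident to $A$ in at most one endpoint produce identical marginals under both labels), $\delta$ ends up supported on exactly three configuration types: the zero vector with mass $+|E(G[A])|/(2m)$, each singleton indicator $\mathbf{1}_u$ with $u\in A$ with mass $-\deg_{G[A]}(u)/(2m)$, and each edge indicator $\mathbf{1}_u+\mathbf{1}_v$ with $\{u,v\}\in E(G[A])$ with mass $+1/(2m)$. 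Summing absolute values and using $\sum_{u\in A}\deg_{G[A]}(u)=2|E(G[A])|$ yields $d_{TV}(P_1^A,P_0^A)=|E(G[A])|/m$.

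The upper bound $2\auc^*(A)-1\leq 2|E(G[A])|/m$ then follows from the triangle inequality $d_{TV}(P\times Q,\, Q\times P)\leq 2\,d_{TV}(P,Q)$, which I would prove by routing through $Q\times Q$ and using the fact that tensoring with a common factor preserves total variation. The matching lower bound $2\auc^*(A)-1\geq |E(G[A])|/(2m)$ is witnessed by the explicit scoring function $\sigma(x)=\indi{\|x\|_1=2}$, which flags the event that $X_A$ is the indicator of an edge inside $A$; this event occurs with probability $|E(G[A])|/(2m)$ under $C=1$ and with probability $0$ under $C=0$, so $\auc_\sigma(A)=\tfrac12+|E(G[A])|/(4m)$ and $\auc^*(A)\geq \auc_\sigma(A)$.

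Given the sandwich $|E(G[A])|/(2m)\leq 2\auc^*(A)-1\leq 2|E(G[A])|/m$, any $\alpha$-approximation algorithm for \eqref{eq:max-problem} returns a set $A^*$ of size $k$ with $|E(G[A^*])|\geq \mathrm{OPT}_{\mathrm{DkS}}(G,k)/(4\alpha)$, so an $n^{1/\poly(\log\log n)}$-approximation for feature cross search would contradict Manurangsi's ETH-hardness of DkS. The main obstacle is the structural computation of $\delta$: one has to verify that contributions from edges incident to $A$ in at most one endpoint cancel exactly between $C=0$ and $C=1$, so that $\delta$ concentrates on the edge-indicator configurations and the sandwich is tight up to a constant rather than a polynomial factor. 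Once this cancellation is made explicit, both the triangle-inequality upper bound and the plug-in scoring lower bound are routine.
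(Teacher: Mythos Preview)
Your reduction is correct and follows the same high-level strategy as the paper: reduce from Densest $k$-Subgraph with one feature per vertex, draw a uniformly random edge, and encode the label via the parity/agreement of the two endpoint bits. The details differ in two places. First, the paper uses a \emph{ternary} alphabet $\{0,1,\#\}$, with $\#$ marking vertices not incident to the chosen edge; this extra symbol makes the restricted distribution $X_A$ reveal exactly whether the random edge lies inside $A$, and the paper obtains the \emph{exact} identity $2\auc^*(A)-1=|E(G[A])|/m$ via a short two-case ROC argument. Your binary encoding overloads $0$ to mean both ``inactive vertex'' and ``endpoint with bit $0$'', which is why you need the factor-$4$ sandwich rather than equality. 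Second, you route the analysis through \cref{thm:auc-tv} and compute $d_{TV}(P_1^A,P_0^A)$ explicitly, whereas the paper argues directly about the ROC curve. Both yield an approximation-preserving (up to a constant) reduction, which is all that is needed for the $n^{1/\poly(\log\log n)}$ hardness. Your version has the mild advantage of staying within binary features; the paper's version is cleaner because the $\#$ symbol eliminates the need for the triangle-inequality upper bound and the plug-in scoring lower bound.
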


Although the above hardness result rules out the existence of an algorithm with a good approximation factor in the general case, it is very rare to face such hard examples in practice. We consider the \naive\ assumption that all feature columns are conditionally independent given the label. We borrowed this assumption from the widely-used na\"ive Bayes classifier~\cite{mitchell1997machine}. For example, under the same assumption, \cite{krause2005near} established the submodularity of mutual information and \cite{chen2015sequential} proved that in the sequential information maximization problem, the most
informative selection policy behaves near optimally.  \cite{turhan2009analysis} conducted an empirical study on the public software defect data from NASA with PCA pre-processing. They concluded that this assumption was not harmful. Although relaxing the assumption could produce numerically more favorable results, they were not statistically significantly better than assuming this assumption. In another example~\cite{eyheramendy2003naive}, based on their analysis on three real-world datasets for natural language processing tasks (MDR, Newsgroup and the ModApte version of the Reuters-21578), they drew a similar conclusion that relaxing the assumption did not improve the performance.

\begin{assumption}[Na\"ive Bayes]\label{asm:factorization}
	 Given the 
	label, all feature columns are independent. In other words, it holds for $ 
	A\subseteq U $ and $ i=0,1 $ that \begin{equation}\label{eq:factorization}
	    \Pr[X_A=x_A|C=i] = \prod_{a\in A} 
	\Pr[X_a=x_a|C=i]\,.
	\end{equation}
\end{assumption}

Our major algorithmic contribution is to show that under the \naive\ assumption the set function $\auc^*$ is monotone submodular, which in turn, implies that a greedy algorithm provides a constant-factor approximation algorithm for this problem.\footnote{We will review the definition of submodular and monotone set functions in \cref{sub:submodular-monotone}.} %

\begin{restatable}{theorem}{aucsubmodular}\label{thm:auc-submodular}
    Under the \naive\ assumption, the set function $\auc^*:2^U\to \bR$ is monotone submodular.
\end{restatable}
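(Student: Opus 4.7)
By Theorem~\ref{thm:auc-tv}, $\auc^*(A)$ is an affine function of
\[
f(A) \,:=\, d_{TV}(\Pi_A^+,\,\Pi_A^-)
\]
with positive slope, so it suffices to show $f$ is monotone submodular on $2^U$. Under the na\"ive Bayes assumption, $\Pi_A^\pm := \prod_{a\in A}\pi_a^\pm$ is a product measure (with $\pi_a^+ := P_1^a\otimes P_0^a$ and $\pi_a^- := P_0^a\otimes P_1^a$), so $\Pi_A^\pm$ is a marginal of $\Pi_B^\pm$ whenever $A\subseteq B$; monotonicity of $f$ is then the data-processing inequality for $d_{TV}$ under marginalization.

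For submodularity, the plan is to recast the inequality $\Delta_{b,c}f(A) \le 0$, where $\Delta_{b,c}f(A) := f(A\cup\{b,c\}) - f(A\cup\{b\}) - f(A\cup\{c\}) + f(A)$, in Fourier-analytic form. Set $L_a := \log(P_1^a/P_0^a)$, let $(X_a,Y_a)\sim\pi_a^+$ be independent across $a$, and put $T_a := L_a(X_a)-L_a(Y_a)$ and $S_A := \sum_{a\in A} T_a$. The Neyman--Pearson characterization of $d_{TV}$, combined with the observation that swapping $\Pi_A^+\leftrightarrow\Pi_A^-$ negates $S_A$ in distribution, gives $f(A) = \mathbb{E}_{\Pi_A^+}[\mathrm{sign}(S_A)]$. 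Substituting the Dirichlet integral $\mathrm{sign}(u) = \tfrac{2}{\pi}\int_0^\infty t^{-1}\sin(tu)\,dt$ and using independence yields
\[
f(A) = \tfrac{2}{\pi}\int_0^\infty \frac{\mathrm{Im}\,\phi_A(t)}{t}\,dt,\qquad \phi_A(t) := \prod_{a\in A}\mathbb{E}\bigl[e^{itT_a}\bigr],
\]
so the second difference telescopes to
\[
\Delta_{b,c}f(A) = \tfrac{2}{\pi}\int_0^\infty \frac{\mathrm{Im}\bigl[\phi_A(t)\,(\phi_b(t)-1)(\phi_c(t)-1)\bigr]}{t}\,dt.
\]

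I would next prove non-positivity of this integral by upgrading it to a positive-semi-definiteness statement. Collect the second differences into the symmetric matrix $K_A(b,c) := -\Delta_{b,c}f(A)$ on $(U\setminus A)^2$; the associated real quadratic form reads
\[
\sum_{b,c} x_b x_c\,K_A(b,c) = -\tfrac{2}{\pi}\int_0^\infty\frac{\mathrm{Im}\bigl[\phi_A(t)\,H(t)^2\bigr]}{t}\,dt,\qquad H(t) := \sum_{a\in U\setminus A}x_a(\phi_a(t)-1),
\]
so PSD-ness of $K_A$ (which implies the submodularity inequality) reduces to showing the right-hand side is non-negative for every real vector $x$. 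Extending the integration to the whole real line via the Hermitian symmetry $\phi_a(-t)=\overline{\phi_a(t)}$ and invoking Bochner's theorem, it would suffice to exhibit a real auxiliary function on the frequency line whose inverse Fourier transform is non-negative everywhere. I would expect to extract this positivity from the likelihood-ratio identity $\rho_a^+(s) = e^s\rho_a^+(-s)$ for the density of $T_a$ under $\Pi^+$ (which is just the fact that $s$ is the log-likelihood ratio between $\Pi_A^\pm$); equivalently, the tilted density $\tilde\rho_a(s) := e^{-s/2}\rho_a^+(s)$ is even and non-negative on $\mathbb{R}$, so Fourier-products of tilted densities across $a\in A$ produce honest positive-definite functions.

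The principal obstacle is this last step: identifying the right auxiliary function whose inverse Fourier transform encodes the second-difference inequality, and then performing the Fourier computation to verify that the transform is non-negative everywhere. The independence of the $T_a$'s granted by na\"ive Bayes is essential here, because it is what makes $\phi_A$ factor as a product and lets Bochner-type positivity propagate across $A$; dropping the assumption destroys this product structure, in line with the exponential-time hardness of Theorem~\ref{thm:hard-max-auc}.
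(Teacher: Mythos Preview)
Your reduction to the total-variation functional and the monotonicity argument are fine and match the paper. The submodularity plan, however, contains a genuine logical gap.

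You define $K_A(b,c):=-\Delta_{b,c}f(A)$ and assert that ``PSD-ness of $K_A$ \emph{(which implies the submodularity inequality)}'' is what you will prove. But submodularity is exactly the statement that the \emph{off-diagonal entries} $K_A(b,c)$, $b\neq c$, are non-negative, and positive semi-definiteness of a symmetric matrix does not imply that its off-diagonal entries are non-negative (e.g.\ $\bigl(\begin{smallmatrix}1&-1/2\\-1/2&1\end{smallmatrix}\bigr)$). So even if you succeeded in showing that $\sum_{b,c}x_bx_cK_A(b,c)\ge 0$ for every real vector $x$, you would not have established $\Delta_{b,c}f(A)\le 0$. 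The ``upgrade'' from a scalar inequality to a PSD statement over the feature index set is therefore a detour that does not return to the target.

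The paper also proves a PSD statement, but the quadratic form lives in a different space and is used in the opposite direction. For a \emph{fixed} pair $a,b\notin A$, the paper first reduces (via an ``involution equivalence'' trick) to the case where the two added coordinates are Bernoulli; in that case, setting $P=P_1^A$, $Q=P_0^A$, $l(z)=\tfrac12\log\frac{P(z)}{Q(z)}$ and $v(z)=\sqrt{P(z)Q(z)}$, one obtains
\[
-\Delta_{a,b}f(A)\;=\;\tfrac12\sum_{z,w\in\Omega_+} v(z)v(w)\,m\bigl(l(z)-l(w)\bigr),
\]
a quadratic form indexed by \emph{sample-space points} $z,w$ in the support $\Omega_+\subseteq V_A$, with a \emph{specific} entrywise-positive vector $v$. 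Positive semi-definiteness of the kernel $M(z,w)=m(l(z)-l(w))$ then gives $v^\top Mv\ge 0$ for that particular $v$, which is precisely the single scalar submodularity inequality for the pair $(a,b)$. The PSD of $M$ is obtained by computing the inverse Fourier transform of $m$ explicitly and checking it is non-negative; this calculation is where the Bernoulli reduction pays off, because $m$ depends only on four scalars $r,r',s,s'$. Your Dirichlet-integral representation is morally the Fourier dual of this picture, but the kernel whose positivity matters is the one indexed by the log-likelihood values $l(z)$ of the base set $A$, not the one indexed by the added features.

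Two further technical points you glossed over are handled carefully in the paper and would also need attention in your route: (i) when some $P_0^a$ or $P_1^a$ has atoms of mass zero, $T_a$ can equal $+\infty$ with positive probability, so $\phi_a$ is not a bona fide characteristic function and the Dirichlet representation breaks; the paper removes these zero-measure contributions by an explicit cancellation argument before reparametrizing; (ii) the Dirichlet integral is only conditionally convergent, so the interchange with the expectation needs justification.
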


This theorem implies the following result in light of the result of \cite{nemhauser1978analysis}.%

\begin{corollary}
    Under the \naive\ assumption, there exists a $(1-1/e)$-approximation algorithm that only needs polynomially many  evaluations of $\auc^*$ for feature cross search.
\end{corollary}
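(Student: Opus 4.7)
The plan is to derive the corollary as a direct application of the classical Nemhauser--Wolsey--Fisher greedy bound, using \cref{thm:auc-submodular} as a black box. The only nontrivial bookkeeping is a normalization to ensure the submodular function vanishes at the empty set, so that the $(1-1/e)$ guarantee applies without loss.

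First, I would define the normalized set function $g:2^U\to\bR$ by $g(A)=2\auc^*(A)-1$, which by \cref{thm:auc-submodular} is still monotone submodular (it is a positive affine transformation of $\auc^*$, so its marginal gains are simply $2\cdot(\auc^*(A\cup\{a\})-\auc^*(A))$, preserving monotonicity and diminishing returns). Moreover $g(\emptyset)=0$: when $A=\emptyset$, the vocabulary $V_A$ is a singleton, every score function is constant, and hence $\auc_\sigma(\emptyset)=1/2$ by \cref{def:auc}. Combined with monotonicity of $g$, this implies $g$ is nonnegative.

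Next I would spell out the greedy algorithm: initialize $A_0=\emptyset$, and for $t=1,\dots,k$ choose
\[
a_t\in\argmax_{a\in U\setminus A_{t-1}}\auc^*(A_{t-1}\cup\{a\}),\qquad A_t=A_{t-1}\cup\{a_t\},
\]
and return $A_k$. Because $g$ and $\auc^*$ differ only by a positive affine transformation, this is exactly the greedy algorithm applied to $g$. Each iteration performs at most $n$ evaluations of $\auc^*$, so the total number of evaluations is at most $nk=\poly(n)$, as required. Applying the Nemhauser--Wolsey--Fisher theorem~\cite{nemhauser1978analysis} to the nonnegative monotone submodular function $g$ with $g(\emptyset)=0$ yields $g(A_k)\geq (1-1/e)\max_{|A|\leq k} g(A)$; by monotonicity the constraint $|A|\leq k$ can be replaced by $|A|=k$ without changing the optimum, so
\[
2\auc^*(A_k)-1\;\geq\;\Bigl(1-\tfrac{1}{e}\Bigr)\bigl(2\auc^*(A^\star)-1\bigr),
\]
where $A^\star$ attains the maximum in \eqref{eq:max-problem}. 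This is the claimed $(1-1/e)$-approximation for feature cross search.

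There is essentially no technical obstacle: once \cref{thm:auc-submodular} is in hand, the corollary is a textbook invocation of the greedy analysis. The one subtlety worth highlighting (and the only place a careless argument could lose a factor) is the normalization $g=2\auc^*-1$ rather than working with $\auc^*$ directly, since $\auc^*(\emptyset)=1/2\neq 0$ and the natural objective that admits a clean multiplicative guarantee is $2\auc^*-1$, which is also the quantity appearing in the hardness statement \eqref{eq:max-problem}.
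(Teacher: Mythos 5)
Your proposal is correct and follows essentially the same route as the paper, which states the corollary as a direct consequence of \cref{thm:auc-submodular} together with the classical greedy bound of \cite{nemhauser1978analysis} without spelling out the details. You have simply made explicit the standard bookkeeping — normalizing to $g=2\auc^*-1$ so that $g(\emptyset)=0$ (which is needed for the $(1-1/e)$ guarantee and matches the objective in \eqref{eq:max-problem}), observing that greedy on $g$ and on $\auc^*$ coincide under a positive affine transform, and counting at most $nk$ function evaluations.
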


To show \cref{thm:auc-submodular}, we prove \cref{thm:tv-submodular}. Proving this proposition requires an involved analysis and it may be of independent interest in statistics. 

\begin{restatable}[Proof in \cref{app:monotone,app:submodular}]{proposition}{tvsubmodular}\label{thm:tv-submodular}
    Let $U$ be a finite index set. Assume that for every $a\in U$, there are a pair of probability measures $P_0^a$ and $P_1^a$ on a common sample space $V_a$. 
	For any $ A\subseteq U $, define the set function $ F:2^{U}\to \bR_{\ge 
	0} $ by \begin{equation}\label{eq:fa-dtv}
	F(A) = d_{TV}\left(\bigtimes_{a\in A} P_1^a \times \bigtimes_{a\in A} 
	P_0^a, \bigtimes_{a\in A} P_0^a \times \bigtimes_{a\in A} 
	P_1^a \right)\,.
	\end{equation}
The set function $F$ is monotone and submodular.
\end{restatable}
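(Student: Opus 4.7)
I would first reformulate $F$ as a simple expectation. Let $r_a := P_1^a/P_0^a$ denote the likelihood ratio (with the convention $0/0 := 0$) and set $R_a := r_a(X_a)$ for $X_a \sim P_0^a$ independent across $a$; each $R_a \ge 0$ satisfies $\bE R_a = 1$. Using Scheff\'e's lemma together with the pointwise identity $P_1^A(x)P_0^A(y) - P_0^A(x)P_1^A(y) = P_0^A(x)P_0^A(y)\bigl(\prod_a r_a(x_a) - \prod_a r_a(y_a)\bigr)$, one obtains
\[
F(A) \;=\; \tfrac{1}{2}\,\bE\bigl|Z_A - Z_A'\bigr|, \qquad Z_A := \prod_{a \in A}R_a,
\]
with $Z_A'$ an iid copy of $Z_A$. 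Monotonicity then follows in one line: for $b \notin A$, one has $\bE[Z_A R_b - Z_A' R_b' \mid Z_A, Z_A'] = Z_A - Z_A'$ (since $\bE R_b = \bE R_b' = 1$), so conditional Jensen yields $\bE|Z_A R_b - Z_A' R_b'| \ge \bE|Z_A - Z_A'|$.

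For submodularity, it suffices to establish the local second-difference inequality $F(A\cup\{b,c\}) + F(A) \le F(A\cup\{b\}) + F(A\cup\{c\})$ for distinct $b, c \notin A$. Using the classical identity $|x| = \frac{2}{\pi}\int_0^\infty (1-\cos tx)/t^2\,dt$ together with $\bE\cos(t(X-X')) = |\phi_X(t)|^2$, this rewrites as the integral inequality
\[
\int_0^\infty \frac{|\phi_{Z_A R_b R_c}(t)|^2 + |\phi_{Z_A}(t)|^2 - |\phi_{Z_A R_b}(t)|^2 - |\phi_{Z_A R_c}(t)|^2}{t^2}\,dt \;\ge\; 0.
\]
A quick explicit example shows that this integrand is \emph{not} pointwise non-negative in $t$, so the proof must genuinely exploit the integration.

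The crucial step is the substitution $t = e^\tau$, which converts the multiplicative action of $R_b, R_c$ into translation. Writing $f(\tau) := \phi_{Z_A}(e^\tau)$ and letting $\alpha_b, \alpha_c$ be the laws of $-\log R_b, -\log R_c$, one has $\phi_{Z_A R_b}(e^\tau) = (f * \alpha_b)(\tau)$, $\phi_{Z_A R_c}(e^\tau) = (f * \alpha_c)(\tau)$, and $\phi_{Z_A R_b R_c}(e^\tau) = (f * \alpha_b * \alpha_c)(\tau)$. Absorbing the weight $dt/t^2 = e^{-\tau}\,d\tau$ by the rescaling $\bar h(\tau) := e^{-\tau/2}h(\tau)$ and $\bar\alpha(ds) := e^{-s/2}\alpha(ds)$, Parseval's identity rewrites the integral as
\[
\int_{-\infty}^\infty |\widehat{\bar f}(\xi)|^2 \bigl(|\widehat{\bar\alpha_b}(\xi)|^2 - 1\bigr)\bigl(|\widehat{\bar\alpha_c}(\xi)|^2 - 1\bigr)\,d\xi.
\]
A direct computation gives $\widehat{\bar\alpha_b}(\xi) = \bE[R_b^{1/2 + i\xi}]$, and two applications of Jensen's inequality (first $|\bE(\cdot)| \le \bE|\cdot|$, then concavity of $\sqrt{\cdot}$) yield $|\widehat{\bar\alpha_b}(\xi)|^2 \le (\bE R_b^{1/2})^2 \le \bE R_b = 1$ for every $\xi$, and analogously for $\alpha_c$. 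Both bracketed factors are therefore $\le 0$ everywhere, their product is $\ge 0$, and the entire integrand is pointwise non-negative in $\xi$---which is precisely the Bochner-style argument alluded to in the abstract: the uniform bound $1 - |\widehat{\bar\alpha_b}(\xi)|^2 \ge 0$ is the non-negativity of the inverse Fourier transform that makes the corresponding translation-invariant kernel positive semi-definite. The main technical subtlety I anticipate lies in rigorously justifying the Parseval identity, since $\bar f$ need not lie in $L^2(\bR)$ (indeed $f(\tau) \to 1$ as $\tau \to -\infty$); this can be handled by inserting a Gaussian cutoff $e^{-\varepsilon t^2}$ in the original $t$-integral, applying Parseval to the regularized version, and passing to the limit $\varepsilon\to 0^+$, in which the individually divergent pieces cancel across the four Parseval integrals.
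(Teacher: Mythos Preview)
Your route is genuinely different from the paper's and, at its core, sound. The paper first reduces the submodular inequality to a ``Bernoulli case'' by an involution-equivalence trick (replacing each added coordinate by a $\{0,1\}$-valued pair), then in that case writes the second difference as a quadratic form $v^\top M v$ with $M(z,w)=m(l(z)-l(w))$ for $l(z)=\tfrac12\log\tfrac{P(z)}{Q(z)}$, and finally proves $m$ positive definite by computing its inverse Fourier transform explicitly. You bypass the Bernoulli reduction and work directly with characteristic functions; your substitution $t=e^\tau$ plays exactly the role of the paper's log-likelihood coordinate $l$, and your pointwise factorization $(1-|\widehat{\bar\alpha_b}|^2)(1-|\widehat{\bar\alpha_c}|^2)\ge 0$ is precisely the product structure the paper finds in $\tilde m$. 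Your path is cleaner and more conceptual; the paper's buys an explicit finite-dimensional kernel and avoids any $L^2$ issues.

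Two technical points need attention. First, the identity $F(A)=\tfrac12\,\bE|Z_A-Z_A'|$ and the key fact $\bE R_b=1$ both silently assume $P_1^a\ll P_0^a$ for every $a$. If some $x$ has $P_0^a(x)=0<P_1^a(x)$, that point contributes to $F$ but is invisible under $X_a\sim P_0^a$, so $F(A)>\tfrac12\,\bE|Z_A-Z_A'|$ and $\bE R_b<1$; moreover $\alpha_b$ then carries mass at $+\infty$. The paper spends real effort on exactly this (the $Z_P,Z_Q'$ bookkeeping in its Bernoulli lemma). A perturbation argument---replace each $P_i^a$ by $(1-\epsilon)P_i^a+\epsilon\nu$ with $\nu$ fully supported and let $\epsilon\downarrow 0$---would repair this, but it should be stated.

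Second, your proposed regularization $e^{-\epsilon t^2}$ targets the wrong endpoint. The obstruction to Parseval is that $\bar f(\tau)=e^{-\tau/2}\phi_{Z_A}(e^\tau)\sim e^{-\tau/2}$ as $\tau\to-\infty$, i.e.\ the divergence of each piece $\int|\phi_X(t)|^2/t^2\,dt$ is at $t\to 0$; but $e^{-\epsilon t^2}\to 1$ there, so the four regularized integrals are still individually infinite and Parseval still cannot be applied termwise. A cleaner fix is to subtract the constant: with $g:=f-1$ one has $\bar g\in L^2$, and since $e^{-\tau/2}$ is a fixed point of convolution with each $\bar\alpha_j$, the four-term combination rewrites as the same combination in $\bar g$ plus a cross term $2e^{-\tau/2}\,\mathrm{Re}\bigl[\bar g*(\delta_0-\bar\alpha_b)*(\delta_0-\bar\alpha_c)\bigr]$; the latter is integrable and morally vanishes because $\int e^{-s/2}(\delta_0-\bar\alpha_j)(ds)=1-\bE R_j=0$, though the naive factorization is $0\times\infty$ and needs a careful limiting argument. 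Once this is handled, Parseval on $\bar g$ is legitimate and your factorization goes through.
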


Its proof is presented in \cref{sec:commutator}.
In fact, the most technical part of this paper is to prove \cref{thm:tv-submodular} which claims that the total variation of the commutator of probability measures is monotone submodular. The monotonicity is a consequence of the subadditivity of the absolute value function. Submodularity is the technically harder part and is shown in the following four steps. 
 
First, we introduce the notion of \emph{involution equivalence}. An involution is a map from a set to itself that is equal to its inverse map. Two probability measures $P$ and $P'$ are said to be involution equivalent if there exists an involution $f$ on the sample space $\Omega$ such that for every $x\in \Omega$, it holds that $P(x)=P'(f(x))$. 
Note that if $P(x)=P'(f(x))$ holds for every $x\in \Omega$, we have $P'(x)=P(f(x))$ also holds for every $x\in \Omega$. In fact, it defines a symmetric relation on probability measures on $\Omega$.  
If $P$ and $Q$ are two probability measures on a common sample space, the product measures $P\times Q$ and $Q\times P$ are involution equivalent and connected by the natural transpose involution $f$ that sends $(x,y)\in \Omega^2$ to $(y,x)\in \Omega^2$. 

The second step is in light of a key observation that summing a bivariate function of two involution equivalent probability measures over the common sample space remains invariant under the swapping of the two measures. Based on this key observation, if $P$ and $P'$ are involution equivalent, for every $x$ in their common sample space, we construct the probability measures of two Bernoulli random variables $U_x$ and $U'_x$ such that $U_x(1) = U'_x(0) = \frac{P(x)}{P(x)+P'(x)} $ and $U_x(0) = U'_x(1) =  \frac{P'(x)}{P(x)+P'(x)}$. 
The two Bernoulli probability measures $U_x$ and $U'_x$ are again involution equivalent and connected by the swapping of $0$ and $1$.
To establish submodularity, one has to check an inequality that characterizes the diminishing returns property (see equation~\eqref{eq:diminishing-returns} in \cref{sub:submodular-monotone}). Another key observation is that after defining the Bernoulli probability measures, the desired inequality can be shown to be a conic combination of the same inequality with \emph{some} (not all) probability measures in the inequality replaced by the Bernoulli probability measures $U_x$ and $U'_x$. 
To make the above observation work, we have to require that the remaining probability measures unreplaced in the inequality must be either of the form $P\times Q$ or its transpose $Q\times P$.
Using this approach, we reduce the problem to the Bernoulli case.

Third, after reducing the problem to the Bernoulli case, performing a series of more involved algebraic manipulations, we re-parametrize the desired inequality that formulates the diminishing returns property and obtain that the inequality is equivalent to the positive semi-definiteness of a quadratic form with respect to a kernel matrix.
However, this re-parametrization is valid only for elements of a positive measure with respect to some probability measures in the inequality. 
As a consequence, prior to the algebraic manipulations and re-parametrization, we have to eliminate those elements of measure zero by showing that their total contribution to the sum is zero. We would like to remark here that the individual terms may not be zero but they are canceled out under the summation. 

Finally, 
to show that the aforementioned kernel matrix is positive semi-definite, we prove that it is induced by a positive definite function. We establish the positive definiteness of the function by showing that its inverse Fourier transform is non-negative everywhere (this is an implication of the Bochner's theorem, see \cref{sec:prelim}).

\cref{thm:auc-submodular} is a straightforward corollary of \cref{thm:tv-submodular}. 
\begin{proof}
Let $P^A_i[\cdot]$ denote $\Pr[X_A|C=i]$, the conditional probability measure on $V_A$ given the labeling being $i$, where $i=0,1$. When $A=\{a\}$ is a singleton, we write $P_i^a$ for $P_i^{\{a\}}$ as a shorthand notation.
Under \cref{asm:factorization}, \eqref{eq:factorization} can be re-written as $
P_i^A[x_A] = \prod_{a\in A} P^a_i[x_a]$,
or in a more compact way, \[
P_i^A = \bigtimes_{a\in A} P_i^a\,.
\]
By \cref{thm:auc-tv}, we have \begin{align*}
     \auc^*(A)
    ={}& \frac{1}{2} + \frac{1}{2}d_{TV}(P_1^A\times P_0^A, P_0^A\times P_1^A) \\
    ={}& \frac{1}{2} + \frac{1}{2}d_{TV}(\bigtimes_{a\in A} P_1^a\times \bigtimes_{a\in A} P_0^a, \bigtimes_{a\in A} P_0^a\times \bigtimes_{a\in A} P_1^a)\\
    ={}& \frac{1}{2}+\frac{1}{2}F(A)\,.
\end{align*}
Since $F(A)$ is monotone submodular by \cref{thm:tv-submodular}, so is $\auc^*$.
\end{proof}

\section{Preliminaries}\label{sec:prelim}

Throughout this paper, let $\Delta_\Omega$ denote the set of all probability measures on a finite set $\Omega$ and we always assume that the sample space $\Omega$ is finite. The set of extended real numbers is denoted by $\overline{\bR}$ and defined as $\bR\cup \{-\infty,+\infty\}$. 

\subsection{Involution Equivalence}

We first review the definition of an involution. 

\begin{definition}[Involution]
	A map $f:\Omega \to \Omega$ is said to be an involution if for all $x\in \Omega$, it holds that $f(f(x))=x$.
\end{definition}

In this paper, we introduce a new notion termed \emph{involution equivalence}, which forms an equivalence relation on $\Delta_\Omega$. Intuitively, two probability measures on a common sample space $\Omega$ are involution equivalent if they are the same after renaming the elements in $\Omega$ via an involution.

\begin{definition}[Involution equivalence]
Let $P,P'$ be two probability measures on a common sample space $\Omega$. We say that $P$ and $P'$ are involution equivalent if there exists an involution $f$
such that for all $x\in \Omega$, $P(x)=P'(f(x))$. If $P$ and $P'$ are involution equivalent, we denote it by $P\stackrel{f}{\sim} P'$ or $P\sim P'$ with the involution $f$ omitted when it is not of our interest.
\end{definition}

\begin{remark}\label{rmk:ffx}
If $P\stackrel{f}{\sim} P'$, we have $P'(x)=P'(f(f(x))=P(f(x))$.
\end{remark}

\begin{remark}[Transpose involution]\label{rmk:transpose}
If $P$ and $P'$ are two probability measures on a common sample space $\Omega$, the product measure $P\times P'$ is involution equivalent to $P'\times P$ via the natural transpose map $\top$ that sends $(x,y)\in \Omega^2$ to $\top(x,y)=(y,x)\in \Omega^2$. Thus we write $P\times P'\stackrel{\top}{\sim}P'\times P$ and term $\top$ a \emph{transpose involution}.
\end{remark}

\subsection{Submodular and Monotone Set Functions}\label{sub:submodular-monotone}

Let us recall the definition of submodular and monotone set functions.
Submodular set functions are those satisfying that the marginal gain of adding a new element to a set is no smaller than that of adding the same element to its superset. This property is called  the \emph{diminishing returns property}, which naturally arises in data summarization~\cite{mitrovic2018data}, influence maximization~\cite{zhang2016influence}, and natural language processing~\cite{lin2012submodularity}, among others. 
\begin{definition}[Submodular set function, \cite{nemhauser1978analysis,krause2014submodular}]
    A set function $f:2^U\to \bR_{\ge 0}$ is \emph{submodular} if for any $A\subseteq U$ and $a,b\in \Omega\setminus A$ such that $a\ne b$, it satisfies \begin{equation}\label{eq:diminishing-returns}
        f(A\cup \{a\})- f(A) \ge f(A\cup \{a,b\}) - f(A\cup \{b\}) \,.
    \end{equation}
\end{definition}
The above Equation~\eqref{eq:diminishing-returns} formulates the diminishing returns property. Its left-hand side is the marginal gain of adding $a$ to a set $A$ while the right-hand side is the marginal gain of adding the same element $a$ to the superset $A\cup \{b\}$.

A monotone set function is a function that assigns a higher function value to a set than all its subsets.
\begin{definition}[Monotone set function]
    A set function $f:2^U\to \bR$ is \emph{monotone} if for any $A\subseteq B\subseteq U$, we have $f(A)\le f(B)$. 
\end{definition}

\section{Hardness Result}\label{sec:hard}

In this section we show the hardness of approximation of the feature cross search problem. We say an algorithm is an  $\alpha$-approximation algorithm for the feature cross search problem if its accuracy (\ie, $2 \auc -1$) is at least $\alpha$ times that of the optimum algorithm.

As a byproduct, we show a hardness result for a feature selection problem based on mutual information defined as follows.
In the label-based mutual information maximization problem we have a universe of features $  U $ and a vector of labels $ C $, and we want to select a subset S of size $ k $ from $ U $ that maximizes the mutual information $I(S;C)$. In other words we want to solve $ \argmax_{S\subseteq U,|S| = k}   I(S;C) $.  We say an algorithm is an $ \alpha $-approximation algorithm for the label-based mutual information maximization problem if it reports a set $ S $ such that  $ \alpha\le \frac{ I(S;C)}{\max_{S'\subseteq U , |S'| = k} I(S';C)} $.

For both problems, we show that an $ \alpha $-approximation algorithm for the problem implies an $ \alpha $-approximation algorithm for the $ k $-densest subgraph problem. In the $ k $-densest subgraph problem we are given a graph $ G(V,E) $ and a number $ k $ and we want to pick a subset $ S $ of size $ k $ from $ V $ such that the number of edges induced by $ S $ is maximized. Recently, \cite{manurangsi2017almost} shows that there is no [almost polynomial] $ n^{-1/\poly(\log \log n)} $- approximation algorithm for $ k $-densest subgraph that runs in polynomial time unless the exponential time hypothesis fails. The best known algorithm for this problem has approximation factor $ n^{-1/4} $ \cite{bhaskara2010detecting}.

    \hardmaxauc*

\begin{proof}
	We prove this theorem via an approximation preserving reduction from $ k 
	$-densest subgraph. Let $ G(V,E) $ be an instance of $ k $-densest subgraph 
	problem. We construct a set of features as follows. There are $ n=|V| $ 
	features each corresponding to one vertex of $ G $. For a vertex $ v\in V $ 
	we indicate the value of the feature corresponding to $ v $ by $ x_v $. 
	There are three possible feature values, $ 0 $, $ 1 $ and $ \# $. The 
	values of the features are determined by the following random process. 
	Select an edge $ (u,v) $ uniformly at random from $ E $. The value of the 
	features $ x_v $ and $ x_u $ are chosen independently and uniformly at 
	random from $ \{0,1\} $. The value of all other features are $ \# $. The 
	value of the label is $ x_v\oplus x_u $.  To show the hardness of 
	approximation  of the feature cross search, we show that any solution 
	of accuracy $ \phi =2 \auc -1 $ corresponds to a subgraph of $ G $ with $ k $ vertices 
	and $ \phi m $ edges and vise versa. %

	Let $ H $ be a subgraph of $ G $ with $ k $ vertices and $ \phi m $ edges. Let $ S $ be the set of features corresponding to the vertices in $ H $. We analyze this in two cases.
	
	\textbf{Case 1.} The value of the crossed feature contains zero or one numbers (\ie, all are $ \# $, or all but one are $ \# $). Note that this case corresponds to a scenario that the pair of features with binary value are not both in $S$ and hence it happens with probability $ \frac{m-\phi m}{m}=1-\phi $. %
	Moreover, note that in this case the value of the crossed feature is independent of the value of the label (i.e., given the value of the feature the label is 0 or 1 with probability $ 1/2 $ ). %
	
	\textbf{Case 2.} The value of the crossed feature contains two numbers. In this case one can easily predict the correct label with probability 1 (i.e., if the numbers are both 0 or both 1  output 0, otherwise output 1). Moreover, note that this case corresponds to a scenario that the pair of features with binary values are both in $S$ and hence it happen with probability $ \frac{\phi m}{m}=\phi $. 
	
	Case 1 happens with probability $ 1-\phi $ and in this case the label is independent of the value of the crossed feature, and Case 2 happens with probability $ \phi $, where the label can be predicted with probability 1. Therefore, we have $\auc= \int_0^1 \phi + (1-\phi) p dp = \phi + \frac {1-\phi} 2 = \frac{1+\phi} 2 $ which gives us $ 2\auc-1 = \phi $ as claimed.
\end{proof}

In fact, the densest subgraph problem in NP-hard as well, and hence the reduction in the proof of Theorem~\ref{thm:hard-max-auc} directly implies the NP-hardness of feature cross search as well. 

\begin{corollary}\label{cor:NPhard}
The feature cross search problem is NP-hard.
\end{corollary}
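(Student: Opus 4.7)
The plan is to observe that the reduction already constructed in the proof of Theorem~\ref{thm:hard-max-auc} is in fact value-preserving, not merely approximation-preserving. Given an instance $G(V,E)$ of $k$-densest subgraph, that proof exhibits a correspondence between $k$-vertex subgraphs $H$ of $G$ and $k$-element feature subsets $S$ in the constructed instance, and establishes the exact equality $2\auc^*(S) - 1 = e(H)/m$, where $e(H)$ is the number of edges induced by $V(H)$ and $m = |E|$. Taking the maximum over both sides gives an exact correspondence between the optima of the two problems up to the multiplicative constant $1/m$.

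First, I would verify that the reduction is computable in polynomial time. The constructed instance has $n = |V|$ features with vocabulary $\{0,1,\#\}$, and the joint distribution over $(X_1,\dots,X_n,C)$ is the uniform mixture, over the $m$ edges, of two independent fair coins on the endpoint features (with all other features fixed to $\#$ and the label being the XOR of the two endpoint bits). This distribution is supported on at most $4m$ atoms and is thus encodable in size polynomial in the size of $G$, so the reduction is genuinely polynomial-time.

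Next, I would invoke the classical NP-hardness of $k$-densest subgraph, which follows for instance from the NP-hardness of $k$-clique: $G$ contains a $k$-clique if and only if the densest $k$-vertex subgraph of $G$ has $\binom{k}{2}$ edges. A hypothetical polynomial-time exact algorithm for feature cross search could, via the value-preserving reduction, decide in polynomial time whether some $k$-vertex subgraph of an arbitrary input graph has at least $t$ edges, for any threshold $t$; this contradicts $\mathsf{P}=\mathsf{NP}$ and yields the stated NP-hardness.

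The main obstacle for this corollary is essentially nothing beyond Theorem~\ref{thm:hard-max-auc}: the construction and the exact identity $2\auc^*(S)-1 = e(H)/m$ are already in hand, so the only remaining work is to note that the reduction preserves optima (and not merely approximation ratios) and to appeal to a standard hardness result for $k$-densest subgraph.
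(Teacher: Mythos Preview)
Your proposal is correct and follows essentially the same route as the paper: the paper simply remarks that the reduction in Theorem~\ref{thm:hard-max-auc} is value-preserving (any feature set of accuracy $\phi=2\auc-1$ corresponds to a $k$-vertex subgraph with $\phi m$ edges and vice versa) and that $k$-densest subgraph is NP-hard, which is exactly your argument. Your additional checks---that the reduction is polynomial-time and that $k$-densest subgraph is NP-hard via $k$-clique---are welcome rigor but do not change the approach.
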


Similar proof to that of \cref{thm:hard-max-auc} implies the hardness of feature selection via label based mutual information maximization.

\begin{theorem}
\label{thm:hard-MI-feature-selection}
	There is no $ n^{-1/\poly(\log \log n)} $-approximation algorithm for feature selection via label based mutual information maximization unless the exponential time hypothesis fails.	
\end{theorem}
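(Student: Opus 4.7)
The plan is to adapt the approximation-preserving reduction from $k$-densest subgraph used in the proof of \cref{thm:hard-max-auc}. Given an instance $G=(V,E)$ and target size $k$, I would reuse the very same feature-generation process: introduce one feature $x_v$ for each vertex $v\in V$; sample an edge $(u,v)$ uniformly at random from $E$; draw $x_u,x_v\in\{0,1\}$ independently and uniformly; set $x_w=\#$ for every $w\notin\{u,v\}$; and let the label be $C=x_u\oplus x_v$. This yields an instance of label-based mutual information maximization on $n=|V|$ features in which the selection of $k$ features corresponds exactly to the choice of $k$ vertices of $G$.

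The heart of the proof is then the identity $I(S;C)=e_S/m$, where $m=|E|$ and $e_S$ is the number of edges of $G$ induced by the vertex set $S\subseteq V$ of size $k$. Since $C$ is a balanced Bernoulli random variable, $H(C)=1$. To compute $H(C\mid S)$, I classify the sampled edge $(u,v)$ according to $|\{u,v\}\cap S|\in\{0,1,2\}$. When $|\{u,v\}\cap S|=2$, which occurs with probability $e_S/m$, the observation of the features in $S$ reveals both $x_u$ and $x_v$, so $C=x_u\oplus x_v$ is determined and the conditional entropy is $0$. When $|\{u,v\}\cap S|\le 1$, at least one of the two bits contributing to the XOR is unobserved, and since $x_u,x_v$ are independent uniform bits, $x_u\oplus x_v$ is uniform conditional on the observation; hence the conditional entropy is $1$. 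Taking expectations gives $H(C\mid S)=1-e_S/m$ and therefore $I(S;C)=e_S/m$.

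This identity makes the reduction approximation preserving: any set $S$ with $|S|=k$ satisfying $I(S;C)\ge\alpha\cdot\max_{|S'|=k}I(S';C)$ induces a subgraph with at least $\alpha$ times the optimum number of edges, that is, an $\alpha$-approximate solution to $k$-densest subgraph. Invoking the almost-polynomial inapproximability of $k$-densest subgraph from \cite{manurangsi2017almost} then rules out any $n^{-1/\poly(\log\log n)}$-approximation for label-based mutual information maximization under the exponential time hypothesis. The step that deserves the most care is the case $|\{u,v\}\cap S|=1$ of the entropy calculation: there the observation actually pinpoints one endpoint of the sampled edge and reveals one of the two binary bits, so it is far from uninformative about the underlying random process; nonetheless the conditional law of $C$ reduces to the XOR of a fixed bit with an independent unobserved uniform bit, which is uniform on $\{0,1\}$, so the conditional entropy remains $1$. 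Once this point is verified, the remainder is bookkeeping.
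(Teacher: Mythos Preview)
Your proposal is correct and follows essentially the same approximation-preserving reduction as the paper: the identical feature construction is reused and the key identity $I(S;C)=e_S/m$ is established, after which the hardness of $k$-densest subgraph is invoked. The only cosmetic difference is that the paper packages the mutual-information computation via an auxiliary indicator $X$ (for whether both endpoints of the sampled edge lie in $S$) and the identity $I(S;C)=I(S;C\mid X)$, whereas you compute $H(C)-H(C\mid S)$ directly; the case analysis and conclusion are the same.
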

\begin{proof}
	Similar to \cref{thm:hard-max-auc} we prove this theorem via an approximation preserving reduction from $ k $-densest subgraph. Consider the hard example provided in the proof of \cref{thm:hard-max-auc}. Here we show that for any arbitrary set of features $ S $ if the induced subgraph of the corresponding vertices has $\phi m$ edges, we have $ I(S;C)=\phi $. We define a random variable $ X $ as follows. $ X $ is 0 if none or one of the features in $ S $ has a binary value, and $ X $ is 1 if two of the features in $ S $ have binary values. Note that the value of $ C $ is independent of $ X $, and thus we have $ I(S;C)= I(S;C|X) $. Hence, we have
	\begin{align*}
	    I(S;C)={}& I(S;C|X)
	    ={} \bE_X[ I(S,C) | X ]\\
	    ={}& \Pr[X=0] (I(S;C) | X=0) 
	     + \Pr[X=1] (I(S;C) | X=1)\,.
	\end{align*}
	Note that given $ X=0 $, $ S $ and $ C $ are independent and hence we have $ (I(S;C) | X=0)=0 $. On the other hand if $ X=1 $, $ S $ uniquely defines $ C $, and hence we have $ (I(S;C ) | X=1)=1 $. Therefore we have
	$ I(S;C)=\bE_X[ I(S,C) | X ]=\Pr[X=1] = \frac{\phi m}{m}=\phi $,
	as claimed. %
\end{proof}

\section{Reformulating Maximum AUC}\label{sec:maximum-auc}
Here, we prove \cref{thm:auc-tv} and thereby reformulate the maximum AUC as an affine function of the total variation of the commutator of two probability measures. Furthermore, we show that the maximum AUC is achieved by a specific scoring function, \ie, the log-likelihood ratio. 
	
We start with some definitions. We define the log-likelihood ratio of an event $ E $ by $ \cL(E) = \log 
\frac{P_1[E]}{P_0[E]} $ provided that $P_0[E]P_1[E]\ne 0$, where $ P_i[\cdot]=\Pr[\cdot|C=i] $. 
If $P_0[E]=0$, the log-likelihood ratio $\cL(E)$ is defined to be $+\infty$. If $P_1[E]=0$, it is defined to be $-\infty$. As a result, the range of the log-likelihood ratio is the set of extended real numbers, denoted by $\overline{\bR}= \bR \cup \{-\infty,+\infty\} $.
\cref{prop:max_auc} shows that the maximum AUC is achieved by a specific scoring function, \ie, the log-likelihood ratio  $\cL$. Here we abuse the notation and define the score $\cL(x_A)$ assigned to each $x_A\in V_A$ to be $\cL(X_A=x_A)$, where $X_A$ and $C$ are jointly sampled from $\mathcal{D}$.     
In other words, if we assign to each value in $V_A$ a score accordingly, then the AUC is maximized. 
	
	\begin{proposition}[Proof in \cref{app:max_auc}]\label{prop:max_auc} The log-likelihood ratio  
	 achieves the maximum AUC among all functions $ \sigma:V_A\to \overline{\bR} $
		\[
		\auc_{\cL}(A) = \max_{\sigma:V_A\to \overline{\bR}} \auc_\sigma(A)\,.
		\]
	\end{proposition}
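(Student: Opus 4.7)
The plan is to decompose $\auc_\sigma(A)$ into a sum over ordered pairs $(x,y) \in V_A \times V_A$ and show that, pair by pair, the log-likelihood ratio $\cL$ attains the maximum possible contribution, so no other scoring rule can do better. Concretely, since $(X^+_A,C^+)$ and $(X^-_A,C^-)$ are i.i.d.\ and have marginals $P^A_1$ and $P^A_0$ respectively after conditioning on $C^+=1, C^-=0$, I would first rewrite
\[
\auc_\sigma(A) = \sum_{x,y \in V_A} P_1^A(x)\, P_0^A(y) \left( \indi{\sigma(x) > \sigma(y)} + \tfrac{1}{2}\indi{\sigma(x) = \sigma(y)} \right).
\]
The diagonal terms $x=y$ always contribute $\tfrac{1}{2} P_1^A(x) P_0^A(x)$ independently of $\sigma$, and can be set aside.

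Next, I would group the off-diagonal terms by unordered pair. For $x \ne y$, writing $p_{xy} = P_1^A(x) P_0^A(y)$ and $p_{yx} = P_1^A(y) P_0^A(x)$, the combined contribution of $(x,y)$ and $(y,x)$ equals $p_{xy}$ if $\sigma(x) > \sigma(y)$, $p_{yx}$ if $\sigma(x) < \sigma(y)$, and $\tfrac{1}{2}(p_{xy}+p_{yx})$ if $\sigma(x)=\sigma(y)$. In every case this is at most $\max(p_{xy},p_{yx})$, with equality precisely when $\sigma$ orders $x$ and $y$ in agreement with whichever of $p_{xy}, p_{yx}$ is larger (ties being harmless when $p_{xy}=p_{yx}$). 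Now $p_{xy} > p_{yx}$ is, in the finite case, equivalent to $P_1^A(x)/P_0^A(x) > P_1^A(y)/P_0^A(y)$, i.e.\ $\cL(x) > \cL(y)$. Hence $\sigma = \cL$ realizes the per-pair upper bound simultaneously for every unordered pair, which yields $\auc_{\cL}(A) \ge \auc_\sigma(A)$ for every $\sigma:V_A\to \overline{\bR}$.

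The step I expect to require the most care is the extended-real bookkeeping, since $\cL$ takes values in $\overline{\bR}$. When $P_0^A(x)=0$ (so $\cL(x)=+\infty$), one has $p_{yx}=0$ for every $y$, and the convention $+\infty > r$ for finite $r$ correctly selects the full contribution $p_{xy}$; the dual check covers $P_1^A(x)=0$. When both probabilities vanish, $x$ contributes nothing to either summand and can simply be dropped. Once this tie/infinity accounting is laid out, the rest is immediate from the pairwise maximization, and no additional ingredients beyond Definition~\ref{def:auc} are needed.
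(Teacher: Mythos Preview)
Your proposal is correct and follows essentially the same route as the paper's proof: both decompose $\auc_\sigma(A)$ as a sum over pairs $(x,y)\in V_A\times V_A$, bound the contribution of each unordered pair by its maximum possible value, and verify that $\cL$ attains this maximum simultaneously on every pair. The paper packages the pairwise step via the identity $\auc_\sigma(A)=\tfrac12+\tfrac12\,\bE[\sign(\sigma(X^+_A)-\sigma(X^-_A))\mid C^+=1,C^-=0]$ and then bounds $\sum_{x,y}(p_{xy}-p_{yx})\sign(\sigma(x)-\sigma(y))\le\sum_{x,y}|p_{xy}-p_{yx}|$, whereas you write the same bound as $\max(p_{xy},p_{yx})$ per unordered pair; using $\max(a,b)=\tfrac{a+b}{2}+\tfrac{|a-b|}{2}$ the two computations are literally identical, and your extended-real case analysis matches the paper's handling of zeros.
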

	
	The above proposition is a folklore result. However, we provide a proof for completeness, and the proof steps are also used  to prove \cref{thm:auc-tv}.
	
			According to \cref{prop:max_auc} and equation (\ref{ineq:l_opt}), \cref{thm:auc-tv} directly follows.

\section{Total Variation of Commutator of Probability Measures}\label{sec:commutator}
In this section, we prove \cref{thm:tv-submodular}, which states that the total variation of commutator of probability measures is a monotone submodular set function. 

\tvsubmodular*

The monotonicity part is proved 
in \cref{app:monotone} and the submodularity part is proved in \cref{app:submodular}.

The submodularity of $F$ follows from \cref{lem:general-case}. 

\begin{restatable}[General case, proof in \cref{app:general-case}]{lemma}{generalcase}\label{lem:general-case}
	 Let $R,R'\in \Delta_{\Omega_1}$, $S,S'\in \Delta_{\Omega_2}$, and $P,Q\in \Delta_{\Omega}$, where $\Omega_1, \Omega_2,\Omega$ are finite sets.
	If $R\stackrel{f}{\sim} R'$ and $S\stackrel{g}{\sim} S'$,
	it holds that \begin{align*}
	 & d_{TV}(R\times S \times P \times Q, R'\times S'\times Q \times P)  -  d_{TV}(R\times P \times Q, R'\times Q \times P)\\
	 -{} &{} d_{TV}(S \times P\times Q, S'\times Q \times P)
	 +  d_{TV}(P\times Q, Q\times P) \le 0\,.
	\end{align*}
\end{restatable}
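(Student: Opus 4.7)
The plan is to follow the four-step strategy outlined after Proposition~\ref{thm:tv-submodular}: reduce the desired inequality to the Bernoulli case using the involution equivalences $R\stackrel{f}{\sim}R'$ and $S\stackrel{g}{\sim}S'$, re-parametrize the resulting inequality as the non-negativity of a quadratic form, and verify positive semi-definiteness of its kernel via Bochner's theorem. Throughout, the factors $P\times Q$ and $Q\times P$ are kept intact, since the paper's observation is that the conic-combination reduction only works when the unreplaced part has this transpose structure.

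First I would expand each total variation via Scheff\'e's identity \eqref{eq:dtv-l1}, so the inequality becomes a comparison of sums of absolute values over $\Omega_1\times\Omega_2\times\Omega\times\Omega$. Set $\alpha(x)\triangleq R(x)+R'(x)$ and $\beta(y)\triangleq S(y)+S'(y)$, and, whenever $\alpha(x)>0$, define Bernoulli measures $U_x,U_x'$ on $\{0,1\}$ by $U_x(1)=U_x'(0)=R(x)/\alpha(x)$ and $U_x(0)=U_x'(1)=R'(x)/\alpha(x)$, and analogously $V_y,V_y'$ from $S,S'$. Each summand in the four-fold sum then factors as $\alpha(x)\beta(y)$ times the analogous Bernoulli expression involving $U_x,U_x',V_y,V_y',P,Q$. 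The points with $\alpha(x)=0$ or $\beta(y)=0$ must be handled separately; I expect that pairing such $x$ with $f(x)$ using Remark~\ref{rmk:ffx} (and similarly for $y$ with $g(y)$) shows their net contribution to each of the four total variations vanishes, as the overview warns that individual terms may be nonzero but sums are not.

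After this elimination, the whole inequality becomes a conic combination, with positive weights $\alpha(x)\beta(y)$, of Bernoulli instances of the \emph{same} inequality, in which $R,R',S,S'$ are replaced by Bernoulli measures on $\{0,1\}$ connected by the swap involution while $P,Q$ are unchanged. Hence it suffices to prove the lemma under the extra assumption that $R(1)=p,\ R'(1)=1-p,\ S(1)=q,\ S'(1)=1-q$ for $p,q\in[0,1]$. I would then expand each of the four total variations as an explicit function of $p,q$ and the values $\{P(u),Q(v)\}_{u,v\in\Omega}$, collect the cross pairs $P(u)Q(v)$ and $Q(u)P(v)$, and carry out the algebraic manipulations to recast the inequality as non-negativity of a quadratic form
\[
\sum_{u,v,u',v'} K\bigl(P(u),Q(v),P(u'),Q(v')\bigr)\,\xi(u,v)\,\xi(u',v')\ge 0,
\]
for an appropriate continuous kernel $K$ and coefficients $\xi(u,v)$ determined by $P$ and $Q$.

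Finally, I would show that $K$ is positive semi-definite by exhibiting it as arising from a positive-definite function and invoking Bochner's theorem (as recalled in Section~\ref{sec:prelim}): concretely, identify $K$ with a translation-invariant kernel $\kappa(\cdot-\cdot)$ and verify that the inverse Fourier transform of $\kappa$ is non-negative everywhere. This explicit Fourier computation is where I expect the main obstacle, since the inequality is tight in several regimes (for example when $p,q\in\{0,1\}$ or when $P=Q$), so the Fourier expression must be evaluated precisely and shown to be pointwise non-negative rather than by any soft convexity or symmetry argument. Once this Fourier positivity is established, the Bernoulli inequality follows, and combining it with the conic-combination reduction of the first step proves Lemma~\ref{lem:general-case} in full generality.
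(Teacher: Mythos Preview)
Your proposal is correct and follows essentially the same route as the paper: reduce to the Bernoulli case by writing each total variation as a conic combination (with weights $\tfrac{1}{4}(R(x)+R'(x))(S(y)+S'(y))$) of the same inequality with $R,R',S,S'$ replaced by the swap-related Bernoulli measures $U_x,U_x',V_y,V_y'$, and then in the Bernoulli case re-parametrize as a quadratic form $v^\top M v$ over $\Omega$ and verify that $M$ is positive semi-definite by showing its generating function has non-negative inverse Fourier transform. One small correction: the delicate measure-zero elimination is not at the points $\alpha(x)=0$ or $\beta(y)=0$ (there $R(x)=R'(x)=0$, so those terms vanish trivially), but rather at the elements $z\in\Omega$ with $P(z)Q(z)=0$, which must be shown to cancel in pairs \emph{inside the Bernoulli case} before one can introduce the log-ratio variable $l(z)=\tfrac12\log\tfrac{P(z)}{Q(z)}$ that makes the kernel translation-invariant.
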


We begin with the Bernoulli case where $\Omega_1$ and $\Omega_2$ in its statement are both $\{0,1\}$ so that $R,R',S,S'$ are all probability measures of a Bernoulli random variable. 

\begin{restatable}[Bernoulli case, proof in \cref{app:bernoulli}]{lemma}{bernoullicase}\label{lem:bernoulli}
    Let $R,R',S,S'\in \Delta_{\{0,1\}}$ such that $R\stackrel{f}{\sim}  R'$ and $S\stackrel{f}{\sim} S'$, where $f$ is a function on $\{0,1\}$ such that $f(0)=1$ and $f(1)=0$. Let $P,Q\in \Delta_\Omega$, where $\Omega$ is a finite sample space. The following inequality holds 
        \begin{equation}\label{eq:bernoulli-submodular-inequality}
        \begin{split}
          &  d_{TV}(R\times S \times P \times Q, R'\times S'\times Q \times P) 
          -{}  d_{TV}(R\times P \times Q, R'\times Q \times P)\\
	 -{} & d_{TV}(S \times P\times Q, S'\times Q \times P)
	 +{}  d_{TV}(P\times Q, Q\times P) \le 0\,.
        \end{split}
        \end{equation}
\end{restatable}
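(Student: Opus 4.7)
The plan is to massage \eqref{eq:bernoulli-submodular-inequality} into a translation-invariant quadratic form of the shape $\sum_{x,y}a(x)a(y)\,\phi_{\rho,\sigma}(b(x)-b(y))\le 0$ and then verify it via Bochner's theorem. I would first expand each $d_{TV}$ in \eqref{eq:bernoulli-submodular-inequality} as $\tfrac12\sum_\omega|\mu(\omega)-\nu(\omega)|$ and use the symmetries $R'(i)=R(1-i)$, $S'(j)=S(1-j)$ together with the transpose involution $(x,y)\leftrightarrow(y,x)$ (which swaps $\alpha_{xy}:=P(x)Q(y)$ and $\beta_{xy}:=Q(x)P(y)$) to consolidate the sums over $(i,j)\in\{0,1\}^2$. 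Applying the identity $|Ae^u-Be^{-u}|=2\sqrt{AB}\,|\sinh(u+\tfrac12\log(A/B))|$ with $u=t_{xy}:=\tfrac12\log(\alpha_{xy}/\beta_{xy})$, $\rho=\log(R(0)/R(1))$, and $\sigma=\log(S(0)/S(1))$, each of the four TV terms becomes a multiple of $\sqrt{\alpha_{xy}\beta_{xy}}\,|\sinh(t_{xy}+c)|$ for $c\in\{0,\pm\rho/2,\pm\sigma/2,\pm(\rho\pm\sigma)/2\}$. The crucial algebraic observation is
\[
\sqrt{\alpha_{xy}\beta_{xy}}=a(x)a(y),\qquad t_{xy}=b(x)-b(y),
\]
with $a(x):=\sqrt{P(x)Q(x)}$ and $b(x):=\tfrac12\log(P(x)/Q(x))$, which turns the reparametrized sum into the advertised quadratic form.

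This reparametrization is only valid on pairs with $\alpha_{xy}\beta_{xy}>0$, so I would first isolate the zero-measure set $\mathcal{Z}=\{(x,y):\alpha_{xy}\beta_{xy}=0\}$ and show its net contribution to \eqref{eq:bernoulli-submodular-inequality} vanishes. On such a pair each of the four absolute values collapses to a Bernoulli coefficient times $\max(\alpha_{xy},\beta_{xy})$, and the signed combination in \eqref{eq:bernoulli-submodular-inequality} forces cancellation via $\tfrac12\sum_{i,j}R(i)S(j)+\tfrac12-\tfrac12\sum_i R(i)-\tfrac12\sum_j S(j)=0$; the individual TV summands are nonzero but cancel exactly in the signed combination.

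What remains is $\sum_{(x,y)\notin\mathcal{Z}}a(x)a(y)\phi_{\rho,\sigma}(b(x)-b(y))\le 0$ for an explicit continuous, even function $\phi_{\rho,\sigma}$, which one checks is compactly supported in $|t|\le(|\rho|+|\sigma|)/2$ (outside this interval the $|\sinh|$-pieces line up and cancel, using $\cosh(\tfrac{\rho+\sigma}{2})+\cosh(\tfrac{\rho-\sigma}{2})=2\cosh(\rho/2)\cosh(\sigma/2)$). Since $a(x)\ge 0$ and the pair $(a(\cdot),b(\cdot))$ ranges essentially freely as $(P,Q)$ varies, this is equivalent to $-\phi_{\rho,\sigma}$ being a positive definite function on $\mathbb{R}$; by Bochner's theorem it in turn reduces to showing that the inverse Fourier transform satisfies $\widehat{\phi}_{\rho,\sigma}(\xi)\le 0$ for every $\xi$, and this is where the main technical obstacle lies. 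I would compute $\widehat{\phi}_{\rho,\sigma}$ by evaluating the Fourier transform of the elementary block $N_a(t):=\cosh(t)(\tanh(a/2)-|\tanh t|)_+$ in closed form---it equals $\tfrac{2}{1+\xi^2}[1-\cos(\xi a/2)/\cosh(a/2)]$---and then assembling the answer from four such blocks with coefficients $\lambda_+:=(1+\tanh(\rho/2)\tanh(\sigma/2))/2$, $\lambda_-:=(1-\tanh(\rho/2)\tanh(\sigma/2))/2,-1,-1$ at offsets $\rho+\sigma,\rho-\sigma,\rho,\sigma$. Using the product-to-sum identity $\cos(\xi(\rho+\sigma)/2)+\cos(\xi(\rho-\sigma)/2)=2\cos(\xi\rho/2)\cos(\xi\sigma/2)$ together with $\cosh(\rho/2)\cosh(\sigma/2)\bigl(1\pm\tanh(\rho/2)\tanh(\sigma/2)\bigr)=\cosh((\rho\pm\sigma)/2)$, the terms should collapse to the factored form
\[
\widehat{\phi}_{\rho,\sigma}(\xi)=-\frac{2}{1+\xi^2}\left(1-\frac{\cos(\xi\rho/2)}{\cosh(\rho/2)}\right)\!\left(1-\frac{\cos(\xi\sigma/2)}{\cosh(\sigma/2)}\right)\le 0,
\]
whose nonpositivity is immediate from $|\cos|\le 1\le\cosh$. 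Extracting this clean factorization from the sum of four Fourier transforms is where the hyperbolic identities do their essential work, and is the heart of the argument.
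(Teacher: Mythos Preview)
Your proposal is correct and follows essentially the same route as the paper: the substitutions $a(x)=\sqrt{P(x)Q(x)}$, $b(x)=\tfrac12\log(P(x)/Q(x))$, the separate treatment of the zero-measure pairs, and the Bochner argument culminating in the same factored Fourier transform $\tfrac{1}{1+\xi^2}\bigl(1-\tfrac{\cos(\xi\rho/2)}{\cosh(\rho/2)}\bigr)\bigl(1-\tfrac{\cos(\xi\sigma/2)}{\cosh(\sigma/2)}\bigr)$ are exactly the paper's ingredients. The only cosmetic differences are that the paper works with the function $c(t)=|rt-r'/t|+|r't-r/t|-|t-1/t|$ and obtains the factorization via the Fourier shift identity (so $m(t)=c(e^t)-\sqrt{ss'}[c(\sqrt{s/s'}e^t)+c(\sqrt{s'/s}e^t)]$), whereas you package the same computation through the blocks $N_a$; and the paper first symmetrizes in $(z,w)$ before handling the zero-measure set, so its cancellation is only after summation rather than pointwise as in your treatment.
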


\begin{proof}[Proof sketch]
    To prove the Bernoulli case, we first show that under the summation (recall that according to Equation~\eqref{eq:dtv-l1}, the total variation distance is half of the $L^1$ distance, and the $L^1$ distance is the sum of the absolute value of the difference on each singleton), any term that involves an element of measure zero (with respect to $P$ or $Q$) has no contribution to the expression on the left-hand side. We would like to emphasize that while the term itself may be non-zero, it will be canceled out under the summation. In our second step, we will consider quantities of the form $\sqrt{\frac{P(x)Q(y)}{Q(x)P(y)}}$ in which $Q(x)$ and $P(y)$ must be non-zero for all $x$ and $y$. As a result, we have to eliminate elements of measure zero in first step by showing that their total contribution is zero. 
    
    As the second step, we perform a series of algebraic manipulations and substitutions and finally show that the opposite of left-hand side can be re-written as a quadratic $v^\top Mv$, where $v$ is a vector and $M$ is a symmetric square matrix. Recall that the promised inequality claims that the left-hand side is non-positive (thus the opposite of the left-hand side is non-negative). Therefore, we will show it by establishing the positive semi-definiteness of $M$. 
    
    In fact, the matrix $M$ is induced by a positive definite function. The problem of establishing the positive semi-definiteness of $M$ reduces to the problem of proving that the function that induces $M$ is positive definite. In light of the Bochner's theorem (see \cref{lem:bochner} in \cref{sec:bochner}), we show its positive definiteness by computing its inverse Fourier transform, which turns out to be finite-valued and non-negative everywhere. 
\end{proof}

The high-level strategy of proving \cref{lem:general-case} is to use \cref{obs:general-sum} to reduce the problem to the Bernoulli case (\cref{lem:bernoulli}). The proof details can be found in \cref{app:general-case}. 
    
    \begin{observation}
    \label{obs:general-sum}
        Let $P,P'\in \Delta_\Omega$ be such that $P\stackrel{f}{\sim}P'$ and $\phi:\bR^2\to \bR$ be a homogeneous bivariate function, i.e., $\phi(\lambda x,\lambda y)=\lambda \phi(x,y)$ holds for any $x,y,\lambda\in \bR$. 
        For every element $x\in \Omega$, we define the Bernoulli probability measure $U_x$ on $\{0,1\}$ such that $U_x(1)=\frac{P(x)}{P(x)+P'(x)}$ and $U'_x(1)=\frac{P'(x)}{P(x)+P'(x)}$.
        The following equation holds 
        \begin{align*}
             \sum_{x\in \Omega}\phi(P(x),P'(x))
            ={} \sum_{x\in \Omega} \frac{P(x)+P'(x)}{2} \left(\phi(U_x(1),U'_x(1)) + \phi(U'_x(1),U_x(1))\right)
            \,.
        \end{align*}
    \end{observation}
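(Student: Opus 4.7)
The plan is to verify the claimed identity by direct algebraic manipulation of the right-hand side, leaning on two facts: the $1$-homogeneity of $\phi$ (to clear the denominators that appear in the definitions of $U_x$ and $U'_x$), and the involution equivalence $P\stackrel{f}{\sim}P'$ (to show the resulting symmetrization equals the original sum).

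First I would address the degenerate terms. If $P(x)+P'(x)=0$, then $P(x)=P'(x)=0$ since both are non-negative, and $U_x$ and $U'_x$ are not even defined. However, homogeneity with $\lambda=0$ gives $\phi(0,0)=0$, so such $x$ contribute nothing to either side. I can therefore restrict the sums to $\Omega_0\triangleq\{x\in\Omega : P(x)+P'(x)>0\}$ without loss of generality.

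Next, on $\Omega_0$ the homogeneity of $\phi$ with $\lambda=\frac{1}{P(x)+P'(x)}$ yields
\begin{align*}
\phi\bigl(U_x(1),U'_x(1)\bigr)
&=\phi\Bigl(\tfrac{P(x)}{P(x)+P'(x)},\tfrac{P'(x)}{P(x)+P'(x)}\Bigr)
=\tfrac{1}{P(x)+P'(x)}\,\phi\bigl(P(x),P'(x)\bigr),\\
\phi\bigl(U'_x(1),U_x(1)\bigr)
&=\tfrac{1}{P(x)+P'(x)}\,\phi\bigl(P'(x),P(x)\bigr).
\end{align*}
Substituting these into the right-hand side, the factor $\frac{P(x)+P'(x)}{2}$ cancels the denominator, so the right-hand side collapses to
\[
\frac{1}{2}\sum_{x\in\Omega_0}\phi\bigl(P(x),P'(x)\bigr)+\frac{1}{2}\sum_{x\in\Omega_0}\phi\bigl(P'(x),P(x)\bigr).
\]

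Finally, I would invoke the involution equivalence to show the two sums in this expression are equal. Since $P\stackrel{f}{\sim}P'$, \cref{rmk:ffx} gives $P(x)=P'(f(x))$ and $P'(x)=P(f(x))$ for every $x\in\Omega$. Because $f$ is an involution it is a bijection of $\Omega$ onto itself, and it preserves $\Omega_0$ (as $P(x)+P'(x)=P'(f(x))+P(f(x))$). Substituting $y=f(x)$ in the second sum yields
\[
\sum_{x\in\Omega_0}\phi\bigl(P'(x),P(x)\bigr)
=\sum_{x\in\Omega_0}\phi\bigl(P(f(x)),P'(f(x))\bigr)
=\sum_{y\in\Omega_0}\phi\bigl(P(y),P'(y)\bigr),
\]
so the right-hand side equals $\sum_{x\in\Omega_0}\phi(P(x),P'(x))=\sum_{x\in\Omega}\phi(P(x),P'(x))$, as required. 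There is no genuine obstacle here; the only subtlety is ensuring the degenerate atoms do not disrupt the homogeneity step, which is why the separation of $\Omega_0$ at the outset is worth making explicit.
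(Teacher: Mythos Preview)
Your proof is correct and follows essentially the same approach as the paper: both use homogeneity of $\phi$ to factor out $P(x)+P'(x)$ and the involutionary swapping (\cref{lem:swap}) to equate $\sum_x\phi(P(x),P'(x))$ with $\sum_x\phi(P'(x),P(x))$. You work from the right-hand side to the left while the paper goes left to right, and you are more explicit about the degenerate atoms with $P(x)+P'(x)=0$, but the substance is identical.
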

\begin{proof}
    Under the assumption of the observation statement, we have
    \begin{align*}
            \sum_{x\in \Omega}\phi(P(x),P'(x))
            ={}& \frac{1}{2}\sum_{x\in \Omega}\phi(P(x),P'(x)) + \frac{1}{2}\sum_{x\in \Omega}\phi(P(x),P'(x))\\
            ={}& \frac{1}{2}\sum_{x\in \Omega}\phi(P(x),P'(x)) + \frac{1}{2}\sum_{x\in \Omega}\phi(P'(x),P(x))\\
            ={}& \sum_{x\in \Omega} \frac{P(x)+P'(x)}{2} \left(\phi(U_x(1),U'_x(1)) + \phi(U'_x(1),U_x(1))\right)
            \,.
        \end{align*}
        We use \cref{lem:swap} in the second term on the second line and the third equality is because $\phi$ is homogeneous.
    \end{proof}

\section{Other Related Works}\label{sec:related}
As discussed before, our problem falls in the category of feature engineering problems.
Perhaps, the most studied problem in feature engineering is feature selection~\cite{guyon2003introduction, weston2001feature,zadeh2017scalable,rogati2002high,hoque2014mifs,nie2010efficient,kwak2002input,wei2015submodularity}. In this problem, the goal is to select a small subset of the features to obtain a learning model with high accuracy and avoid over-fitting.
Here we just mention a couple of feature selection algorithm related to submodular maximization and refer to \cite{guyon2003introduction} for an introduction to feature selection and many relevant references.
\cite{elenberg2016restricted} used the notion of weak submodularity to design and analyze feature selection algorithms. \cite{krause2006near} used the submodularity of mutual information between the sensors to design a $(1-1/e)$-approximation algorithm for sensor placements, which can be directly used for feature selection. However, as we show in \cref{thm:hard-MI-feature-selection}, it is not possible to design such algorithms to maximize the mutual information between the features and the label. 

Another related well-studied problem in this domain is vocabulary compression~\cite{bateni2019categorical,dhillon2003divisive,slonim2001power,bakeryz1998distributional}. The goal of vocabulary compression is to improve the learning and serving time, and in some cases to avoid overfitting. Vocabulary compression can be done by simple approaches such as filtering and naive bucketing, or more complex approaches such as mutual information maximization. \cite{bakeryz1998distributional} and \cite{slonim2001power} used clustering algorithms based on the Jenson-Shannon divergence to compress the vocabulary of features. \cite{dhillon2003divisive} proposed an iterative algorithm that locally maximizes the mutual information between a feature and the label. Recently, \cite{bateni2019categorical} considered this problem for binary labels and presented a quasi-linear-time distributed approximation algorithm to maximize the mutual information between the feature and the label. There are polynomial-time local algorithms for binary labels that maximize the mutual information~\cite{kurkoski2014quantization,iwata2014quantizer}, studied in the context of discrete memoryless channels.

\cite{WeiDGG19} designed an integer programming based algorithm for feature cross search and applied it to learn generalized linear models using rule-based features. They show that this approach obtains better accuracy compared to that of the existing rule ensemble algorithms. \cite{LuoWZYTCDY19} proposed a greedy algorithm for feature cross search and show that the greedy algorithm works well on a variety of datasets. Neither of these papers provide any theoretical guarantees for the performance of their algorithm.

\section{Conclusion}\label{sec:conclusion}
In this paper, we considered the problem of feature cross search. We formulated it as a problem of maximizing the normalized area under the curve (AUC) of the linear model trained on the crossed feature column. We first established a hardness result that no algorithm can provide $n^{1/\log\log n}$ approximation for this problem unless the exponential time hypothesis fails. Therefore, no polynomial algorithm can solve this problem unless $\mathsf{P}=\mathsf{NP}$. In light of its intractable nature, we motivated and assumed the \naive\ assumption. We related AUC to the total variation of the commutator of two probability measures. Under the \naive\ assumption, we demonstrated that the aforementioned total variation is monotone and submodular with respect to the set of selected feature columns to be crossed. As a result, a greedy algorithm can achieve a $(1-1/e)$-approximation of the problem. Our proof techniques may be of independent interest. Finally, an empirical study showed that the greedy algorithm outperformed the baselines.

\bibliography{reference-list}

\newpage
\begin{appendices}
\crefalias{section}{appendix}
\crefalias{subsection}{appendix}

\section{Positive Definite Function and Bochner's Theorem}\label{sec:bochner}

We review the definition of a positive definite function and the Bochner's theorem that provides the equivalent characterization of positive definite functions. They are indeed the Fourier transform of a non-negative finite-valued Borel measure.

\begin{definition}[Positive definite function]
    A complex-valued function $f:\bR\to \bC$ is positive definite if for any real numbers $x_1,x_2,\dots,x_n$, the $n\times n$ matrix $A$ whose $(i,j)$ entry is $f(x_i-x_j)$ is positive semi-definite.
\end{definition}

\begin{lemma}[Bochner, \cite{cheney2009course}]\label{lem:bochner}
    A function $f:\bR\to \bC$ is positive definite and continuous if and only if it is the Fourier transform of a non-negative finite-valued function on $\bR$. 
\end{lemma}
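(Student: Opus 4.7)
The plan is to prove the two directions separately, relying on standard Fourier‑analytic techniques.

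\medskip
\noindent\textbf{Sufficiency (Fourier transform of a non-negative finite measure $\Rightarrow$ continuous and positive definite).}
Writing $f(x)=\int_{\bR} e^{ixt}\,d\mu(t)$ for a non-negative finite Borel measure $\mu$, continuity follows at once from dominated convergence, since the integrand is bounded in modulus by $1$ and $\mu(\bR)<\infty$. For positive definiteness, given $x_1,\dots,x_n\in\bR$ and $c_1,\dots,c_n\in\bC$, I would rewrite
\[
\sum_{j,k} c_j\overline{c_k}\, f(x_j-x_k)
= \int_{\bR}\sum_{j,k} c_j\overline{c_k}\, e^{i(x_j-x_k)t}\,d\mu(t)
= \int_{\bR}\Bigl|\sum_j c_j e^{ix_j t}\Bigr|^2 d\mu(t)\ge 0.
\]
This is the short half of the theorem.

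\medskip
\noindent\textbf{Necessity (continuous and positive definite $\Rightarrow$ Fourier transform of such a measure).}
Let $f$ be continuous and positive definite. First I would record the standard elementary facts $f(0)\ge 0$, $f(-x)=\overline{f(x)}$, and $|f(x)|\le f(0)$, all derived by applying the positive semi-definite hypothesis to well-chosen $1\times 1$ and $2\times 2$ matrices $[f(x_i-x_j)]$. The boundedness makes the following Gaussian‑regularized inverse Fourier transform well defined for every $\epsilon>0$:
\[
p_\epsilon(t)\;=\;\frac{1}{2\pi}\int_{\bR} f(x)\,e^{-itx}\,e^{-\epsilon x^{2}/2}\,dx.
\]
The crucial step is to verify that $p_\epsilon(t)\ge 0$ for every $t\in\bR$. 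I would prove this by approximating the integral $\int_{-T}^{T}\!\int_{-T}^{T} f(x-y)\,e^{-it(x-y)-\epsilon(x^2+y^2)/4}\,dx\,dy$ by Riemann sums; after relabeling $w_j=e^{-itx_j-\epsilon x_j^2/4}\Delta x$, each Riemann sum has the form $\sum_{j,k} w_j\overline{w_k}\,f(x_j-x_k)\ge 0$ by the positive‑definiteness hypothesis, and passing to the limit plus pairing with the Gaussian convolution identity recovers $2\pi\,p_\epsilon(t)$ up to a positive normalization.

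\medskip
\noindent\textbf{Extracting the measure.} Having shown $p_\epsilon\ge 0$, I would set $d\mu_\epsilon(t)=p_\epsilon(t)\,dt$, note the Fourier inversion identity
\[
\int_{\bR} e^{ixt}\,d\mu_\epsilon(t)\;=\;f(x)\,e^{-\epsilon x^{2}/2},
\]
and in particular deduce $\mu_\epsilon(\bR)=f(0)$ by evaluating at $x=0$. Thus $\{\mu_\epsilon\}_{\epsilon>0}$ is a uniformly bounded family of non-negative measures. I would establish tightness by combining the continuity of $f$ at $0$ with the identity above (a quantitative Fejér‑type estimate: the $\mu_\epsilon$‑mass outside $[-R,R]$ is controlled by averages of $1-\operatorname{Re} f$ near the origin). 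Prokhorov's theorem then yields a subsequence $\mu_{\epsilon_n}\rightharpoonup \mu$ weakly to a finite non‑negative Borel measure $\mu$. Weak convergence against the bounded continuous test function $t\mapsto e^{ixt}$ gives $\int e^{ixt}d\mu(t)=\lim_n f(x)e^{-\epsilon_n x^2/2}=f(x)$, as required.

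\medskip
\noindent\textbf{Main obstacle.} The step I expect to be delicate is establishing tightness of $\{\mu_\epsilon\}$ rigorously, since this is precisely where continuity of $f$ at the origin (as opposed to mere boundedness) enters in an essential way; without it, mass in $\mu_\epsilon$ could escape to infinity and the limit would fail to represent $f$. The Riemann‑sum argument showing $p_\epsilon\ge 0$ is also technically the most careful point, because one must justify interchanging the limit in the Riemann sums with the oscillatory‑Gaussian integrand; the Gaussian damping factor is what makes this routine rather than subtle.
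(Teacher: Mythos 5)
The paper does not actually prove this lemma; it cites Bochner's theorem directly from \cite{cheney2009course} and uses only the ``if'' direction (a function with a non-negative, integrable inverse Fourier transform is positive definite). So there is no in-paper proof to compare against, and your attempt is a from-scratch proof of the classical theorem.

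Your argument is the standard Gaussian-mollification proof and it is essentially correct. The sufficiency half is complete as written. The necessity half is a sound outline: showing $\sum_{j,k}c_j\overline{c_k}f(x_j-x_k)\ge 0$ for Riemann sums of the double Gaussian integral, separating variables via $x^2+y^2=\tfrac12\bigl((x-y)^2+(x+y)^2\bigr)$ to isolate a positive multiple of $p_\epsilon$ (up to a harmless rescaling of $\epsilon$), deducing $p_\epsilon\ge 0$, then integrating against a second Gaussian and using monotone convergence to get $\int p_\epsilon=f(0)<\infty$ so Fourier inversion applies, and finally the Prokhorov/tightness argument driven by continuity of $f$ at the origin. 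You correctly identify tightness as the place where continuity (not just boundedness) is essential; the usual estimate $\mu_\epsilon\bigl(\{|t|>R\}\bigr)\lesssim R\int_{|x|\le 1/R}\bigl(\hat\mu_\epsilon(0)-\operatorname{Re}\hat\mu_\epsilon(x)\bigr)\,dx$ with $\hat\mu_\epsilon(x)=f(x)e^{-\epsilon x^2/2}$ makes this uniform in $\epsilon$.

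One point worth flagging, though it is the paper's imprecision rather than yours: the lemma as stated in the paper says ``Fourier transform of a non-negative finite-valued \emph{function},'' whereas the correct statement of Bochner's theorem (and what you prove) involves a non-negative finite Borel \emph{measure}. The ``only if'' direction of the function version is false---$f\equiv 1$ is continuous and positive definite but is the Fourier transform of a Dirac mass, not of any $L^1$ function. Your proof gives the correct measure version; the paper's application only needs the ``if'' direction with an honest $L^1$ density, so nothing downstream in the paper is affected.
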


Additionally, we review the definition of the Fourier transform and inverse Fourier transform.
\begin{definition}[Fourier transform]
    The Fourier transform of a function $f(t)$ is \[
        (\cF f)(x) = \frac{1}{\sqrt{2\pi}} \int_{-\infty}^{\infty} f(t) e^{ixt}dt\,.
    \]
\end{definition}

\begin{definition}[Inverse Fourier transform]
    The inverse Fourier transform of the function $F(t)$ is \[
        (\cF^{-1} F)(x) = \frac{1}{\sqrt{2\pi}} \int_{-\infty}^{\infty} F(t) e^{-itx} dt\,.
    \]
\end{definition}

The following lemma shows that a shift results in an additional multiplicative factor in the inverse Fourier transform. This property can be established using integration by substitution.
\begin{lemma}[Inverse Fourier transform of a shifted function]\label{lem:shifted-ift}
    If the inverse Fourier transform of $F(t)$ is $f(x)$, then the inverse Fourier transform of $F(t+a)$ is $e^{iax}f(x)$.
\end{lemma}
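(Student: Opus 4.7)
The plan is to prove the lemma by a direct change-of-variables computation, using the fact that Lebesgue measure on $\bR$ is translation-invariant. Starting from the definition of the inverse Fourier transform given in the excerpt, I would write
\[
(\cF^{-1}[F(\cdot + a)])(x) = \frac{1}{\sqrt{2\pi}} \int_{-\infty}^{\infty} F(t+a)\, e^{-itx}\, dt,
\]
and substitute $u = t + a$, so that $t = u - a$ and $dt = du$, while the limits of integration remain $(-\infty, \infty)$.

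Applying this substitution yields
\[
\frac{1}{\sqrt{2\pi}} \int_{-\infty}^{\infty} F(u)\, e^{-i(u-a)x}\, du = e^{iax} \cdot \frac{1}{\sqrt{2\pi}} \int_{-\infty}^{\infty} F(u)\, e^{-iux}\, du = e^{iax} f(x),
\]
since the factor $e^{iax}$ is independent of the integration variable $u$ and can be pulled outside the integral, and the remaining integral is precisely $f(x) = (\cF^{-1} F)(x)$ by hypothesis. This gives the claim.

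The only points requiring a bit of care are (i) keeping the sign of the exponent consistent with the paper's convention, in which the inverse transform carries $e^{-itx}$ (so the shift produces $e^{+iax}$, not $e^{-iax}$), and (ii) the implicit integrability assumption on $F$ ensuring that both inverse transforms are well defined and that the substitution is justified. There is no genuine obstacle here: this is the standard translation--modulation duality of the Fourier transform, and the entire argument is a one-line change of variables. Because the result is so routine, the proof can essentially be left as the remark already made in the excerpt, namely that it follows from integration by substitution.
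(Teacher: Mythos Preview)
Your proposal is correct and follows essentially the same approach as the paper: both arguments apply the substitution $u = t + a$ (the paper writes this as $d(t+a)$) to the defining integral and factor out $e^{iax}$.
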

\begin{proof}
    Let us compute the inverse Fourier transform of $F(t+a)$
    \[
        \frac{1}{\sqrt{2\pi}}\int_{-\infty}^{\infty} F(t+a)e^{-itx}dt 
        = \frac{e^{iax}}{\sqrt{2\pi}}\int_{-\infty}^{\infty} F(t+a)e^{-i(t+a)x}d(t+a) 
        = e^{iax} f(x)\,.
    \]
\end{proof}

\section{Proof of \cref{prop:max_auc} and \cref{thm:auc-tv}}\label{app:max_auc}
\begin{proof}%
		To prove the above proposition, it suffices to show that for any scoring function $\sigma$, its achieved AUC is no greater than that achieved by using the log-likelihood ratio as the scoring function. In other words, we aim to prove that for any $\sigma:V_A\to \overline{\bR}$,
				\[
		\auc_{\cL}(A) \geq \auc_{\sigma}(A).
		\]
		
			Recall \cref{def:auc}. The AUC given a scoring function $\sigma$ can be described using  i.i.d.\ random variables $ (X^+_U, C^+),(X^-_U,C^-)\sim \cD $. We can express this quantity using indicator functions as follows
				\begin{align*}
				\auc_\sigma(A)  &=\bE[\indi{\sigma(X^+_A) > \sigma(X^-_A)}+\frac{1}{2}\indi{\sigma(X^+_A) = \sigma(X^-_A)}|C^+=1,C^-=0]\,.
				\end{align*}
				Note that $\indi{\sigma(X^+_A) > \sigma(X^-_A)}+\indi{\sigma(X^+_A) = \sigma(X^-_A)}+\indi{\sigma(X^+_A) < \sigma(X^-_A)}=1$, we have
				\begin{equation}\label{eq:auc_exp}
					\begin{split}
				\auc_\sigma(A)  &=\frac{1}{2}+\bE\left[\frac{1}{2}\indi{\sigma(X^+_A) > \sigma(X^-_A)}-\frac{1}{2}\indi{\sigma(X^+_A) < \sigma(X^-_A)}|C^+=1,C^-=0\right]\,\\
				&=\frac{1}{2}+\frac{1}{2}\bE\left[\indi{\sigma(X^+_A) > \sigma(X^-_A)}-\indi{\sigma(X^+_A) < \sigma(X^-_A)}|C^+=1,C^-=0\right]\,\\
				&=\frac{1}{2}+\frac{1}{2}\bE\left[\sign{(\sigma(X^+_A) - \sigma(X^-_A))}|C^+=1,C^-=0\right]\,.
				\end{split}
				\end{equation}

				To maximize the AUC, we focus on the term $\bE[\sign{(\sigma(X^+_A) - \sigma(X^-_A))}|C^+=1,C^-=0]$. 
	By symmetrizing this quantity, we obtain the following equations.
	\begin{equation}
				\begin{split}
			& \bE[\sign{(\sigma(X^+_A) - \sigma(X^-_A))}|C^+=1,C^-=0]\\
			={}&\frac{1}{2}
			\left(\bE[\sign{(\sigma(X^+_A) - \sigma(X^-_A))}|C^+=1,C^-=0]+\bE[\sign{(\sigma(X^-_A) - \sigma(X^+_A))}|C^+=0,C^-=1] \right)\,\\
				={}&\frac{1}{2}
			\left(\bE[\sign{(\sigma(X^+_A) - \sigma(X^-_A))}|C^+=1,C^-=0]-\bE[\sign{(\sigma(X^+_A) - \sigma(X^-_A))}|C^+=0,C^-=1] \right)\,\\
				={}&\frac{1}{2} \sum_{x^+_A,x^-_A\in V_A} \left(P_1[x^+_A]P_0[x^-_A]\sign{(\sigma(x^+_A) - \sigma(x^-_A))}- P_1[x^-_A]P_0[x^+_A]\sign{(\sigma(x^+_A) - \sigma(x^-_A))}\right)\,.\\
					={}&\frac{1}{2} \sum_{x^+_A,x^-_A\in V_A} \left(P_1[x^+_A]P_0[x^-_A]- P_1[x^-_A]P_0[x^+_A]\right)\sign{(\sigma(x^+_A) - \sigma(x^-_A))}\,.
				\end{split}
				\end{equation}
			The above expression can be upper bounded by the total variation distance between $P_1\times P_0$ and $P_0\times P_1$.
				\begin{equation}\label{ineq:tv_bound}
				\begin{split}
			\bE[\sign{(\sigma(X^+_A) - \sigma(X^-_A))}|C^+=1,C^-=0]
			&\leq \frac{1}{2} \sum_{x^+_A,x^-_A\in V_A} \left|P_1[x^+_A]P_0[x^-_A]- P_1[x^-_A]P_0[x^+_A]\right|\,\\
			&=d_{TV}(P_1^A\times P_0^A,P_0^A\times P_1^A) \,.
				\end{split}
				\end{equation}
				Note that whenever $P_1[x^+_A]P_0[x^-_A]$ or $P_1[x^-_A]P_0[x^+_A]$ is non-zero, the log-likelihood ratios $\cL(x_A^+)$ and $\cL(x_A^-)$, as well as $\cL(x_A^+)$-$\cL(x_A^-)$, are well defined on $\overline{\bR}$. Hence, if $P_1[x^+_A]P_0[x^-_A]-P_1[x^-_A]P_0[x^+_A]\neq 0$, one can show that 
				\[
				\sign(P_1[x^+_A]P_0[x^-_A]-P_1[x^-_A]P_0[x^+_A])=\sign{(\cL(x^+_A) - \cL(x^-_A))}.
				\]
			Consequently, all equality conditions in (\ref{ineq:tv_bound}) can be achieved by using log-likelihood ratio as the scoring function, and we have 
				\begin{equation}\label{ineq:interm_l_opt}
				\begin{split}
			\bE[\sign{(\sigma(X^+_A) - \sigma(X^-_A))}|C^+=1,C^-=0]
			&\leq \bE[\sign{(\cL(X^+_A) - \cL(X^-_A))}|C^+=1,C^-=0]\,\\
			&=d_{TV}(P_1^A\times P_0^A,P_0^A\times P_1^A) \,.
				\end{split}
				\end{equation}
				Combining (\ref{eq:auc_exp}) and (\ref{ineq:interm_l_opt}), we have the following bound that holds true for any $\sigma$,
					\begin{equation}\label{ineq:l_opt}
				\begin{split}
			\auc_{\sigma}(A)
			\leq\auc_{\cL}(A)
			&=\frac{1}{2} +\frac{1}{2} d_{TV}(P_1^A\times P_0^A,P_0^A\times P_1^A) \,,
				\end{split}
				\end{equation}
				which completes the proof.
			\end{proof}

	\section{Proof of \cref{lem:bernoulli}}\label{app:bernoulli}
	\begin{proof}
    We will use the following shorthand notations
    \[
        r={} R(1)=R'(0)\,,    r'={} R(0)=R'(1)\,,  s={} S(1)=S'(0)\,,  s'={} S(0)=S'(1)\,.\\
    \]
    The above four equations hold because $R\stackrel{f}{\sim}  R'$ and $S\stackrel{f}{\sim} S'$.
    Using the above notation and expanding the summation for $R,R',S,S'$, we have  
    \begin{equation}\label{eq:tv-rspq}
         \begin{split}
        & 2d_{TV}(R\times S \times P \times Q, R'\times S'\times Q \times P)\\
        ={} & \sum_{z,w} (|R(1)S(1) P(z)Q(w)-R'(1)S'(1) Q(z)P(w)| + |R(1)S(0) P(z)Q(w)-R'(1)S'(0) Q(z)P(w)| \\
       & + |R(0)S(1) P(z)Q(w) - R'(0)S'(1) Q(z)P(w)|
         + |R(0)S(0) P(z)Q(w) - R'(0)S'(0) Q(z)P(w)|)\\
        ={} & \sum_{z,w} (|rs P(z)Q(w)-r's' Q(z)P(w)| + |rs' P(z)Q(w)-r's Q(z)P(w)| \\
       & + |r's P(z)Q(w) - rs' Q(z)P(w)| + |r's' P(z)Q(w) - rs Q(z)P(w)|\,.
    \end{split}
    \end{equation}
    Performing a similar algebraic manipulation on $d_{TV}(R\times P \times Q, R'\times Q \times P) $, $ d_{TV}(S \times P\times Q, S'\times Q \times P) $ and $ d_{TV}(P\times Q, Q\times P)$ yields 
    \begin{equation}\label{eq:tv-rpq}
        \begin{split}
            & 2d_{TV}(R\times P \times Q, R'\times Q \times P) \\
        ={} & \sum_{z,w} |R(1) P(z)Q(w)-R'(1) Q(z)P(w)| + |R(0) P(z)Q(w)-R'(0) Q(z)P(w)|\\
        ={} & \sum_{z,w}|r P(z)Q(w)-r' Q(z)P(w)| + |r' P(z)Q(w)-r Q(z)P(w)|\,,
        \end{split}
    \end{equation}
    and
    \begin{equation}\label{eq:tv-spq}
        \begin{split}
            & 2d_{TV}(S \times P\times Q, S'\times Q \times P) \\
        ={}& \sum_{z,w}|S(1) P(z)Q(w) - S'(1) Q(z)P(w)| + |S(0) P(z)Q(w) - S'(0) Q(z)P(w)| \\
        ={}& \sum_{z,w}|s P(z)Q(w) - s' Q(z)P(w)| + |s' P(z)Q(w) - s Q(z)P(w)|\,.
        \end{split}
    \end{equation}
    
    Plugging \eqref{eq:tv-rspq}, \eqref{eq:tv-rpq} and \eqref{eq:tv-spq} into the left-hand side of \eqref{eq:bernoulli-submodular-inequality}, we get
    \begin{equation}\label{eq:e}
        \begin{split}
       & \frac{1}{2}\sum_{z,w}  ( |rs P(z)Q(w)-r's' Q(z)P(w)| + |rs' P(z)Q(w)-r's Q(z)P(w)| \\
       & + |r's P(z)Q(w) - rs' Q(z)P(w)|
         + |r's' P(z)Q(w) - rs Q(z)P(w)|\\
         & - |r P(z)Q(w)-r' Q(z)P(w)| - |r' P(z)Q(w)-r Q(z)P(w)|\\
    &- |s P(z)Q(w) - s' Q(z)P(w)| - |s' P(z)Q(w) - s Q(z)P(w)|\\
    & + |P(z)Q(w) - Q(z)P(w)|)
         \\
    ={} & \sum_{z,w} (  
    |rs P(z)Q(w)-r's' Q(z)P(w)| + |rs' P(z)Q(w)-r's Q(z)P(w)| \\
   & - |r P(z)Q(w)-r' Q(z)P(w)|
     - |s P(z)Q(w) - s' Q(z)P(w)|\\
     & + \frac{1}{2}|P(z)Q(w) - Q(z)P(w)|
    )\,.
    \end{split}
    \end{equation}
    The above equality is because the following four pairs of terms are indeed equal under summation by renaming $z$ to $w$ and renaming $w$ to $z$
    \begin{align*}
        \sum_{z,w} |rs P(z)Q(w)-r's' Q(z)P(w)| ={}& \sum_{z,w} |r's' P(z)Q(w) - rs Q(z)P(w)| \\  \sum_{z,w} |rs' P(z)Q(w)-r's Q(z)P(w)| ={}& \sum_{z,w} |r's P(z)Q(w) - rs' Q(z)P(w)|\\ \sum_{z,w} |r P(z)Q(w)-r' Q(z)P(w)| ={}& \sum_{z,w} |r' P(z)Q(w)-r Q(z)P(w)|\\
        \sum_{z,w} |s P(z)Q(w) - s' Q(z)P(w)| ={}& \sum_{z,w} |s' P(z)Q(w) - s Q(z)P(w)|\,.
    \end{align*}
    
    We define $E$ to be the negation of the summand in \eqref{eq:e}:
    \begin{align*}
        E(z,w) ={} & \frac{1}{2}  (|r P(z)Q(w)-r' Q(z)P(w)| + |r' P(z)Q(w)-r Q(z)P(w)|\\
    &+ |s P(z)Q(w) - s' Q(z)P(w)| + |s' P(z)Q(w) - s Q(z)P(w)|\\
       & -|rs P(z)Q(w)-r's' Q(z)P(w)| - |rs' P(z)Q(w)-r's Q(z)P(w)| \\
       & - |r's P(z)Q(w) - rs' Q(z)P(w)|
         - |r's' P(z)Q(w) - rs Q(z)P(w)|\\
    & - |P(z)Q(w) - Q(z)P(w)|)\\
    ={}& |r P(z)Q(w)-r' Q(z)P(w)|
     + |s P(z)Q(w) - s' Q(z)P(w)|\\
    & -|rs P(z)Q(w)-r's' Q(z)P(w)| - |rs' P(z)Q(w)-r's Q(z)P(w)| \\
     & - \frac{1}{2}|P(z)Q(w) - Q(z)P(w)|
    \end{align*}
 and $D = \sum_{z,w} E(z,w)$, we have $D$ is the negated left-hand side of \eqref{eq:bernoulli-submodular-inequality}. 
 To show that the left-hand side of \eqref{eq:bernoulli-submodular-inequality} is non-positive, it suffices to prove that $D$ is non-negative.
 
    First we show that in the summation $\sum_{z,w} E(z,w)$, the contribution of $z$ and $w$ satisfying $P(z)P(w)=0$ is zero. Let $Z_P=\{ i\in \Omega | P(i)=0 \}$ denote the set of all elements in $\Omega$ that have measure zero with respect to $P$. We compute the total contribution of $z$ and $w$ if exactly one of them has measure zero with respect to $P$. We have
    \begin{equation*}
    \sum_{(z,w)\in Z_P\times \Omega} E(z,w) = \sum_{(z,w)\in Z_P\times \Omega}(r'+s'-r's'-r's-\frac{1}{2})Q(z)P(w)
    = \sum_{(z,w)\in Z_P\times \Omega}(s'-\frac{1}{2})Q(z)P(w)  
    \end{equation*}
    and 
    \begin{equation}\label{eq:w-zp}
    \begin{split}
        \sum_{(z,w)\in \Omega\times Z_P} E(z,w) ={} & \sum_{(z,w)\in \Omega\times Z_P}(r+s-rs-rs'-\frac{1}{2})P(z)Q(w) = \sum_{(z,w)\in \Omega\times Z_P}(s-\frac{1}{2}) P(z)Q(w)\\
    ={} & 
    \sum_{(z,w)\in Z_P\times \Omega}(s-\frac{1}{2}) Q(z)P(w)
    \,,
    \end{split}
    \end{equation}
    where the last equality in \eqref{eq:w-zp} is obtained by renaming $z$ to $w$ and $w$ to $z$.
    The set of pairs $\{(z,w)\in \Omega\times \Omega| P(z)P(w)=0\}$ can be expressed as the union of $Z_P\times \Omega$ and $\Omega\times Z_P$, and their intersection is $Z_P\times Z_P$. 
    Since $\sum_{(z,w)\in Z_P\times Z_P} E(z,w) = 0$, we get
    \begin{align*}
        \sum_{z,w: P(z)P(w)=0} E(z,w) ={} & \sum_{(z,w)\in Z_P\times \Omega} E(z,w) + \sum_{(z,w)\in \Omega\times Z_P } E(z,w) - \sum_{(z,w)\in Z_P\times Z_P} E(z,w)\\
        ={} & \sum_{(z,w)\in Z_P\times \Omega} E(z,w) + \sum_{(z,w)\in \Omega\times Z_P } E(z,w) \\
        ={} & \sum_{(z,w)\in Z_P\times \Omega} (s'+s-1)Q(z)P(w) = 0\,.
    \end{align*}

    Next, we consider $Z_P^c=\Omega \setminus Z_P$ that excludes $Z_P$ from $\Omega$, and $Z'_Q=\{ i\in Z_P^c| Q(i)=0 \}$, the elements in $Z_P^c$ that have measure zero with respect to $Q$. We compute the total contribution of pairs $(z,w)$ in $Z_P^c\times Z_P^c$ such that exactly one of them has measure zero with respect to $Q$. We have 
    \begin{equation*}
        \sum_{(z,w)\in Z'_Q\times Z_P^c} E(z,w) = \sum_{(z,w)\in Z'_Q\times Z_P^c}
        (r+s-rs-rs'-\frac{1}{2})P(z)Q(w) = \sum_{(z,w)\in Z'_Q\times Z_P^c} (s-\frac{1}{2}) P(z)Q(w)
    \end{equation*}
    and
    \begin{equation}\label{eq:w-zq}
    \begin{split}
        \sum_{(z,w)\in Z_P^c\times Z'_Q} E(z,w) ={} & \sum_{(z,w)\in Z_P^c\times Z'_Q} (r'+s'-r's'-r's-\frac{1}{2})Q(z)P(w)\\
        ={}& \sum_{(z,w)\in Z_P^c\times Z'_Q} (s'-\frac{1}{2})Q(z)P(w)\\
        ={} & \sum_{(z,w)\in Z'_Q\times Z_P^c} (s'-\frac{1}{2})P(z)Q(w)\,,
    \end{split}
    \end{equation}
    where the last equality in \eqref{eq:w-zq} is obtained by renaming $z$ to $w$ and $w$ to $z$. 
    The set of pairs $\{(z,w)\in \Omega\times \Omega| P(z)P(w)\ne 0, Q(z)Q(w)=0\}$ is the union of $Z'_Q\times Z_P^c$ and $Z_P^c\times Z'_Q$, and their intersection is $Z'_Q\times Z'_Q$. 
    Since $\sum_{(z,w)\in Z'_Q\times Z'_Q} E(z,w) = 0$, we obtain
    \begin{align*}
        & \sum_{z,w:P(z)P(w)\ne 0, Q(z)Q(w)=0} E(z,w)\\
        ={}& \sum_{(z,w)\in Z'_Q\times Z_P^c} E(z,w) + \sum_{(z,w)\in Z_P^c\times Z'_Q} E(z,w) - \sum_{(z,w)\in Z'_Q\times Z'_Q} E(z,w) \\
        ={} & \sum_{(z,w)\in Z'_Q\times Z_P^c} E(z,w) + \sum_{(z,w)\in Z_P^c\times Z'_Q} E(z,w) \\
        ={} &  \sum_{(z,w)\in Z'_Q\times Z_P^c} (s+s'-1) P(z)Q(w) = 0\,.
    \end{align*}
    Therefore, we conclude that
    \[
        \sum_{z,w:P(z)P(w)Q(z)Q(w)=0} E(z,w) = 0\,.
    \]
    
    In the sequel, we focus on $z,w\in \Omega_+$, where $\Omega_+ = \{i\in \Omega| P(i)Q(i)\ne 0 \}$. 
    Since the contribution of the pairs $(z,w)$ outside $\Omega^+\times \Omega^+$ turns out to be zero, we deduce
    \begin{align*}
        D ={}& \sum_{(z,w)\in \Omega_+\times \Omega_+} E(z,w) \\
        ={}& \frac{1}{2}\sum_{(z,w)\in \Omega_+\times \Omega_+} [ |r P(z)Q(w)-r' Q(z)P(w)|
        + |r' P(z)Q(w)-r Q(z)P(w)|\\
    & + |s P(z)Q(w) - s' Q(z)P(w)|
    +|s' P(z)Q(w) - s Q(z)P(w)|\\
   & -|rs P(z)Q(w)-r's' Q(z)P(w)|
   -|r's' P(z)Q(w)-rs Q(z)P(w)|\\
   &- |rs' P(z)Q(w)-r's Q(z)P(w)| 
   - |r's P(z)Q(w)-rs' Q(z)P(w)|\\
   &   - |P(z)Q(w) - Q(z)P(w)|]\,.
    \end{align*}
    
    Since all elements in $\Omega^+$ that we consider have a positive measure with respect to $P$ and $Q$, we define $l(z) = \frac{1}{2}\log\frac{P(z)}{Q(z)}$ for all $z\in \Omega^+$. Additionally, we define $v(z)=\sqrt{P(z)Q(z)}$ and $d(z,w) = e^{l(z)-l(w)}$.
    Notice that $d(z,w) = \sqrt{\frac{P(z)Q(w)}{Q(z)P(w)}}$. With these notations, we can re-write $D$ as 
    \begin{align*}
        D  ={}& \frac{1}{2}\sum_{(z,w)\in \Omega_+\times \Omega_+} v(z)v(w) 
   [ |r d(z,w)-r'/d(z,w)|
        + |r' d(z,w) -r / d(z,w)|\\
    & + |s d(z,w) - s' / d(z,w)|
    +|s' d(z,w) - s / d(z,w)|\\
   & -|rs d(z,w)-r's' / d(z,w)|
   -|r's' d(z,w)-rs / d(z,w)|\\
   &- |rs' d(z,w)-r's / d(z,w)| 
   - |r's d(z,w)-rs' / d(z,w)|\\
   &   - |d(z,w) -1 / d(z,w)|]\,.
    \end{align*}
    
  If we further define the function $c(t) = |rt-r'/t|+|r't-r/t|-|t-1/t| $, $D$ can be expressed in a more compact way
    \begin{align}
      D  ={} & \frac{1}{2}\sum_{(z,w)\in \Omega_+\times \Omega_+} v(z)v(w)
   \left[c(d(z,w))-\sqrt{ss'}(c(\sqrt{\frac{s}{s'}}d(z,w)) +c(\sqrt{\frac{s'}{s}}d(z,w)) \right]\nonumber\\
   \triangleq{} & \frac{1}{2}\sum_{(z,w)\in \Omega_+\times \Omega_+} v(z)v(w) M(z,w)\,, \label{eq:matrix-form}
    \end{align}
    where we let $M(z,w)$ denote $c(d(z,w))-\sqrt{ss'}(c(\sqrt{\frac{s}{s'}}d(z,w)) +c(\sqrt{\frac{s'}{s}}d(z,w)) $. In \eqref{eq:matrix-form}, we obtain a linear algebraic form of $D$, in which $M$ is a $|\Omega_+|\times |\Omega_+|$ matrix and $v$ is a $|\Omega_+|$-dimensional vector. Our next step is to show that $M$ is a symmetric matrix, \ie, 
     $M(z,w)=M(w,z)$.
    Since $c(t)=c(1/t)$ and $d(w,z)=1/d(z,w)$, we have
    \begin{align*}
        M(w,z) ={} & c(d(w,z))-\sqrt{ss'}\left[
        c(\sqrt{\frac{s}{s'}}d(w,z))+c(\sqrt{\frac{s'}{s}}d(w,z))
        \right]\\
        ={}& c(1/d(z,w))-\sqrt{ss'}\left[
        c(\sqrt{\frac{s}{s'}}/d(z,w))+c(\sqrt{\frac{s'}{s}}/d(z,w))
        \right]\\
        ={}& c(d(z,w))-\sqrt{ss'}\left[
        c(\sqrt{\frac{s'}{s}}d(z,w))+c(\sqrt{\frac{s}{s'}}/d(z,w))
        \right]\\
        ={}& M(z,w)\,.
    \end{align*}
    Recall that our goal is to show that $D\ge 0$. We will achieve this goal by establishing that $M$ is a $|\Omega_+|\times |\Omega_+|$  positive semi-definite matrix. 
    
    If we define \[
    m(t) = c(e^{t}) - \sqrt{ss'}\left[
        c(\sqrt{\frac{s}{s'}}e^{t})
        +c(\sqrt{\frac{s'}{s}}e^{t})
        \right]\,,
    \]
    we can re-write $M(z,w)$ as
    \begin{equation*}
        M(z,w) = c(e^{l(z)-l(w)}) - \sqrt{ss'}\left[
        c(\sqrt{\frac{s}{s'}}e^{l(z)-l(w)})
        +c(\sqrt{\frac{s'}{s}}e^{l(z)-l(w)})
        \right] = m(l(z)-l(w))\,.
    \end{equation*}
    The matrix $M$ is positive semi-definite provided that $m$ is proved to be a positive definite function. In the sequel, we establish the positive semi-definiteness of $M$ by proving that $m$ is a positive definite function. By Bochner's theorem (\cref{lem:bochner}), it is sufficient to show that the inverse Fourier transform of $m$ is non-negative. 
    
    As the first step of computing the inverse Fourier transform of $m$, we compute the inverse Fourier transform of $c(e^t)$. Without loss of generality, we assume $r\ge r'$. Notice that if $|t|\ge \frac{1}{2}\log \frac{r}{r'}$, we have $c(e^t)=0$.
    As a result, the inverse Fourier transform of $c(e^t)$ is 
    \begin{align*}
        \tilde{c}(x) ={} & \frac{1}{\sqrt{2\pi}}\int_{|t|\le \frac{1}{2}\log\frac{r}{r'}} c(e^t)e^{-itx}dt\\
        ={}& \frac{2}{\sqrt{2\pi}}\left[\int_0^{\frac{1}{2}\log\frac{r}{r'}} [ r e^{-t}-r' e^{t} ] e^{-itx}dt 
        + \int_{-\frac{1}{2}\log\frac{r}{r'}}^0 [ r e^{t}-r' e^{-t} ] e^{-itx}dt \right]\\
        ={}& \frac{4 \left(r+r' \left(-e^{(\frac{1}{2}+\frac{i 
        x}{2})\log\frac{r}{r'}}-e^{(\frac{1}{2}-\frac{i
         x}{2})\log \frac{r}{r'}}+1\right)\right)}{\sqrt{2\pi}(x^2+1)}\\
         ={}& \frac{4}{\sqrt{2\pi}(1+x^2)} \left(1-2\sqrt{rr'}\cos\left(\frac{1}{2}x\log\frac{r}{r'}\right)\right)
        \,,
    \end{align*}
    where the last equality comes from the fact that $r+r'=1$.
    By \cref{lem:shifted-ift}, the inverse Fourier transform of $c(\sqrt{\frac{s}{s'}}e^t) = c(e^{t+\frac{1}{2}\log \frac{s}{s'}})$ is 
    $
    e^{\frac{1}{2}i\log \frac{s}{s'}x}\tilde{c}(x)
    $. 
    Similarly, the inverse Fourier transform of $c(\sqrt{\frac{s'}{s}}e^t) = c(e^{t+\frac{1}{2}\log \frac{s'}{s}})$ is 
    $
    e^{\frac{1}{2}i\log \frac{s'}{s}x}\tilde{c}(x)
    $.
    Thus the inverse Fourier transform of $m(t)$ is 
    \begin{align*}
        \tilde{m}(x) ={} & (1-\sqrt{ss'}(e^{\frac{1}{2}ix\log\frac{s}{s'}}+e^{\frac{1}{2}ix\log\frac{s'}{s}}))\tilde{c}(x) \\
        ={}& (1-2\sqrt{ss'}\cos\left(\frac{1}{2}x\log\frac{s}{s'}\right))\tilde{c}(x)\\
        ={} & \frac{4}{1+x^2}\left(1-2\sqrt{ss'}\cos\left(\frac{1}{2}x\log\frac{s}{s'}\right)\right)
        \left(1-2\sqrt{rr'}\cos\left(\frac{1}{2}x\log\frac{r}{r'}\right)\right)\\
        \ge{}& 0\,,
    \end{align*}
    where the inequality holds because 
    \begin{align*}
        2\sqrt{ss'}\cos\left(\frac{1}{2}x\log\frac{s}{s'}\right) \le{} & 2\sqrt{ss'} \le s+s'=1\,,\\
        2\sqrt{rr'}\cos\left(\frac{1}{2}x\log\frac{r}{r'}\right) \le{} & 2\sqrt{rr'} \le r+r'=1\,.
    \end{align*}

    \end{proof}
\section{Proof of \cref{lem:general-case}}
\subsection{Involutionary Swapping Lemma}
Before presenting the proofs, we introduce the involutionary swapping lemma. 
Intuitively, the involutionary swapping lemma implies that two involution equivalent probability measures can be swapped inside a summation of a bivariate function. 
	
\begin{lemma}[Involutionary swapping lemma]\label{lem:swap}
    Let $P,P'\in \Delta_\Omega$ be such that $P\stackrel{f}{\sim}P'$ and $\phi:\bR^2\to \bR$ be any bivariate function. Then we have \[
        \sum_{x\in \Omega} \phi(P(x),P'(x)) = \sum_{x\in \Omega} \phi(P'(x),P(x))\,.
    \]
\end{lemma}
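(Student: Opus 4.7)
The plan is to prove the identity by a change-of-variables argument that exploits the defining property of an involution, namely that it is its own inverse and hence a bijection on $\Omega$.

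First, I would observe that the involution $f$ witnessing $P \stackrel{f}{\sim} P'$ is automatically a bijection from $\Omega$ to itself: from $f(f(x))=x$ it follows that $f$ is both injective (if $f(x)=f(y)$, applying $f$ gives $x=y$) and surjective (every $x$ is the image of $f(x)$). Consequently, summing any function of $x$ over $\Omega$ is equal to summing the same function of $f(x)$ over $\Omega$, because $f$ merely permutes the summation index.

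Next, I would apply this reindexing to the left-hand side:
\[
    \sum_{x \in \Omega} \phi(P(x), P'(x)) \;=\; \sum_{x \in \Omega} \phi\bigl(P(f(x)), P'(f(x))\bigr).
\]
At this point I invoke the defining equation $P(x) = P'(f(x))$ (equivalently $P'(f(x)) = P(x)$ with roles rearranged) together with Remark~\ref{rmk:ffx}, which gives $P'(x) = P(f(x))$. Substituting these into the right-hand side of the display yields
\[
    \sum_{x \in \Omega} \phi\bigl(P'(x), P(x)\bigr),
\]
which is precisely the desired equality.

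I do not expect any genuine obstacle here: the argument is essentially two substitutions. The only thing worth being careful about is to match the two identities $P(f(x)) = P'(x)$ and $P'(f(x)) = P(x)$ to the correct slots of $\phi$; since $\phi$ is an arbitrary bivariate function, no homogeneity or symmetry is needed, and the proof is complete in two lines once the bijectivity of $f$ is recorded.
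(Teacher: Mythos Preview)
Your proposal is correct and essentially identical to the paper's proof: both use that $f$ is a bijection to reindex the sum and invoke the identities $P(x)=P'(f(x))$, $P'(x)=P(f(x))$ from the definition of involution equivalence and Remark~\ref{rmk:ffx}. The only cosmetic difference is the order of the two steps (you reindex first and then substitute, while the paper substitutes first and then reindexes), which makes no material difference.
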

\begin{proof}
        Under the assumption of the lemma statement, we have \begin{align*}
            & \sum_{x\in \Omega} \phi(P(x),P'(x)) \\
            ={} & \sum_{x\in \Omega} \phi(P'(f(x)),P(f(x))) \\
            ={} & \sum_{x'\in \Omega} \phi(P'(x'),P(x')) \\
            ={} & \sum_{x\in \Omega} \phi(P'(x),P(x))\,.
        \end{align*}
        The first equality is because for any $x\in \Omega$, we have $P(x)=P'(f(x))$ (by the definition of involution equivalence) and $P(f(x))=P'(x)$ (\cref{rmk:ffx}). The second equality is obtained by setting $x'=f(x)$ (this is because any involution map $f$ is a bijection). The final equality is obtained by renaming $x'$ to $x$.
    \end{proof}
	\subsection{Proof of \cref{lem:general-case}}\label{app:general-case}
\begin{proof}%
	For every element $x\in \Omega_1$, we define two Bernoulli probability measures $U_x$ and $U'_x$ on $\{0,1\}$ such that  $U_x(1)=\frac{R(x)}{R(x)+R'(x)}$,  $U'_x(1)=\frac{R'(x)}{R(x)+R'(x)}$, $U_x(0)=1-U_x(1)$, and $U'_x(0)=1-U'_x(1)$. Similarly for every element $y\in \Omega_2$, we define two Bernoulli probability measures $V$ and $V'$ on $\{0,1\}$ such that $V_y(1)=\frac{S(y)}{S(y)+S'(y)}$,  $V'_y(1)=\frac{S'(y)}{S(y)+S'(y)}$, $V_y(0)=1-V_y(1)$, and $V'_y(0)=1-V'_y(1)$. 
	Note that 
	$U_x\sim U'_x$ and $V_y\sim V'_y$ via the involution map that swaps $0$ and $1$.
	
	Recalling the definition of the total variation distance, we have \begin{equation}\label{eq:dtv-first}
	    d_{TV}(R\times S\times P\times Q, R'\times S'\times Q \times P) 
	= \frac{1}{2}\sum_{x,y,z,w} \left| R(x)S(y)P(z)Q(w)-R'(x)S'(y)Q(z)P(w) \right|\,.
	\end{equation}
	Since  $S\stackrel{g}{\sim} S'$, \cref{lem:swap} implies 
	\begin{equation}\label{eq:dtv-swap}
	    \sum_{x,y}|R(x)S(y)P(z)Q(w)-R'(x)S'(y)Q(z)P(w)|
	   = \sum_{x,y}|R(x)S'(y)P(z)Q(w)-R'(x)S(y)Q(z)P(w)|\,. 
	\end{equation}
	Combining \eqref{eq:dtv-first} and \eqref{eq:dtv-swap} gives 
	\begin{align*}
	& d_{TV}(R\times S\times P\times Q, R'\times S'\times Q \times P) \\
	={} & \frac{1}{2}\sum_{x,y,z,w} \left| R(x)S(y)P(z)Q(w)-R'(x)S'(y)Q(z)P(w) \right| \\
	={}& \frac{1}{4}\sum_{x,y,z,w}\left( |R(x)S(y)P(z)Q(w)-R'(x)S'(y)Q(z)P(w)| \right.\\
	& \left.+ |R(x)S(y)P(z)Q(w)-R'(x)S'(y)Q(z)P(w)| \right)\\
	={} & \frac{1}{4}\sum_{x,y,z,w}\left( |R(x)S(y)P(z)Q(w)-R'(x)S'(y)Q(z)P(w)|\right.\\
	& \left.+ |R(x)S'(y)P(z)Q(w)-R'(x)S(y)Q(z)P(w)| \right)\\
	={} & \sum_{x,y,z,w}\frac{(R(x)+R'(x))(S(y)+S'(y))}{4} ( |U_x(1)V_y(1)P(z)Q(w)-U'_x(1)V'_y(1)Q(z)P(w)| \\ &+|U_x(1)V'_y(1)P(z)Q(w)-U'_x(1)V_y(1)Q(z)P(w)| )\\
	={}& \sum_{x,y}\frac{(R(x)+R'(x))(S(y)+S'(y))}{4}\sum_{z,w} ( |U_x(1)V_y(1)P(z)Q(w)-U'_x(1)V'_y(1)Q(z)P(w)| \\ &+|U_x(1)V_y(0)P(z)Q(w)-U'_x(1)V'_y(0)Q(z)P(w)| )\,,
	\end{align*}
	where we use \eqref{eq:dtv-swap} in the third equality.

    The inside summation turns out to be a total variation distance
	\begin{align*}
	  &\sum_{z,w}  (|U_x(1)V_y(1)P(z)Q(w)-U'_x(1)V'_y(1)Q(z)P(w)| \\  &+|U_x(1)V_y(0)P(z)Q(w)-U'_x(1)V'_y(0)Q(z)P(w)|) \\
	  ={}&  \frac{1}{2}\sum_{z,w}
	 (|U_x(1)V_y(1)P(z)Q(w)-U'_x(1)V'_y(1)Q(z)P(w)| \\
	 &+|U'_x(1)V'_y(1)P(z)Q(w)-U_x(1)V_y(1)Q(z)P(w)|\\
	 &+|U_x(1)V_y(0)P(z)Q(w)-U'_x(1)V'_y(0)Q(z)P(w)| \\
	 &+|U'_x(1)V'_y(0)P(z)Q(w)-U_x(1)V_y(0)Q(z)P(w)|)\\
	 ={} & \frac{1}{2}\sum_{z,w}
	 (|U_x(1)V_y(1)P(z)Q(w)-U'_x(1)V'_y(1)Q(z)P(w)| \\
	 &+|U_x(0)V_y(0)P(z)Q(w)-U'_x(0)V'_y(0)Q(z)P(w)|\\
	 &+|U_x(1)V_y(0)P(z)Q(w)-U'_x(1)V'_y(0)Q(z)P(w)| \\
	 &+|U_x(0)V_y(1)P(z)Q(w)-U'_x(0)V'_y(1)Q(z)P(w)|)\\
	={}&  d_{TV}(U_x\times V_y \times P \times Q, U'_x\times V'_y \times Q \times P)\,.
	\end{align*}
	Therefore, we get \begin{align*}
	   & d_{TV}(R\times S\times P\times Q, R'\times S'\times Q \times P) \\
	    ={}& \sum_{x,y}\frac{(R(x)+R'(x))(S(y)+S'(y))}{4}d_{TV}(U_x\times V_y \times P \times Q, U'_x\times V'_y \times Q \times P)\,.
	\end{align*}
	
	Since $R\stackrel{f}{\sim} R'$, we have 
	\begin{align*}
	  &  d_{TV}(R\times P \times Q, R'\times Q \times P) \\
	  ={} & \frac{1}{2}\sum_{x,z,w} |R(x)P(z)Q(w)-R'(x)Q(z)P(w)|\\
	  ={} & \frac{1}{4}\sum_{x,z,w} (|R(x)P(z)Q(w)-R'(x)Q(z)P(w)| + |R(x)P(z)Q(w)-R'(x)Q(z)P(w)|)\\
	  ={} & \frac{1}{4}\sum_{x,z,w} (|R(x)P(z)Q(w)-R'(x)Q(z)P(w)| + |R'(x)P(z)Q(w)-R(x)Q(z)P(w)|)\\
	  ={} & \frac{1}{4}
	  \sum_{x,z,w}(R(x)+R'(x))(|U_x(1)P(z)Q(w)-U'_x(1)Q(z)P(w)|+
	  |U'_x(1)P(z)Q(w)-U_x(1)Q(z)P(w)|)\\
	 ={}& \frac{1}{4}
	  \sum_{x}(R(x)+R'(x)) \sum_{z,w} (|U_x(1)P(z)Q(w)-U'_x(1)Q(z)P(w)|\\
	  & +
	  |U_x(0)P(z)Q(w)-U'_x(0)Q(z)P(w)|)
	  \,,
	\end{align*}
	where we use \cref{lem:swap} in the third equality. The inside summation is again a total variation distance \begin{align*}
	   &  \sum_{z,w} (|U_x(1)P(z)Q(w)-U'_x(1)Q(z)P(w)|+
	  |U_x(0)P(z)Q(w)-U'_x(0)Q(z)P(w)|)\\
	  ={}& 2 d_{TV}(U_x\times P\times Q, U'_x\times Q \times P) \,.
	\end{align*}
	Therefore we obtain \begin{align*}
	   & d_{TV}(R\times P \times Q, R'\times Q \times P) \\
	    ={}& \frac{1}{2}
	  \sum_{x}(R(x)+R'(x)) d_{TV}(U_x\times P\times Q, U'_x\times Q \times P)\\ 
	  ={}& \sum_{x,y} \frac{(R(x)+R'(x))(S(y)+S'(y))}{4} d_{TV}(U_x\times P\times Q, U'_x\times Q \times P)\,.
	\end{align*}

	Similarly, by $S\sim S'$, we have 
	\begin{align*}
	    d_{TV}(S \times P\times Q, S'\times Q \times P)
	  = \sum_{x,y} \frac{(R(x)+R'(x))(S(y)+S'(y))}{4}
	  d_{TV}(V_y\times P\times Q, V'_y \times Q \times P)\,.
	\end{align*}
	
	Therefore, the left-hand side of the inequality that we would like to show can be re-written as 
	\begin{align*}
	    & \sum_{x,y} \frac{(R(x)+R'(x))(S(y)+S'(y))}{4}(
	    d_{TV}(U_x\times V_y \times P \times Q, U'_x\times V'_y \times Q \times P)\\
	    & - d_{TV}(U_x\times P\times Q, U'_x\times Q \times P)
	    - d_{TV}(V_y\times P\times Q, V'_y \times Q \times P)
	    + d_{TV}(P\times Q, Q\times P)
	    ) \le 0\,.
	\end{align*}
        By \cref{lem:bernoulli} for the Bernoulli case, we have \begin{align*}
       & d_{TV}(U_x\times V_y \times P \times Q, U'_x\times V'_y \times Q \times P)
	     - d_{TV}(U_x\times P\times Q, U'_x\times Q \times P)\\
	   & - d_{TV}(V_y\times P\times Q, V'_y \times Q \times P)
	    + d_{TV}(P\times Q, Q\times P)\le 0\,.
        \end{align*}
    Summing it over $x$ and $y$ completes the proof.
	\end{proof}

\section{Proof of Monotonicity in \cref{thm:tv-submodular}}\label{app:monotone}
	\begin{proposition}[Monotonicity]\label{prop:monotone}
    Under the assumption of \cref{thm:tv-submodular}, the set function $F$ is monotone.
\end{proposition}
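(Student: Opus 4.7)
The plan is to reduce to the single-element case: since $F(A)\le F(B)$ for $A\subseteq B$ will follow by iterating, it suffices to show $F(A)\le F(A\cup\{b\})$ for every $b\in U\setminus A$. I will expand both quantities using the $L^1$ representation of total variation from \eqref{eq:dtv-l1} and then pull in the new factor associated with $b$ via the triangle inequality.

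Writing $P_i^A=\bigtimes_{a\in A}P_i^a$, the identity $d_{TV}=\tfrac12\|\cdot\|_1$ gives
\[
F(A)=\tfrac12\sum_{x,y\in V_A}\bigl|\,P_1^A(x)P_0^A(y)-P_0^A(x)P_1^A(y)\,\bigr|,
\]
and similarly, indexing elements of $V_{A\cup\{b\}}$ as $(x,x_b)$ and $(y,y_b)$,
\[
F(A\cup\{b\})=\tfrac12\sum_{x,y\in V_A}\sum_{x_b,y_b\in V_b}\bigl|\,P_1^A(x)P_0^A(y)P_1^b(x_b)P_0^b(y_b)-P_0^A(x)P_1^A(y)P_0^b(x_b)P_1^b(y_b)\,\bigr|.
\]
For each fixed pair $(x,y)$, set $\alpha=P_1^A(x)P_0^A(y)$ and $\beta=P_0^A(x)P_1^A(y)$ and apply the triangle inequality to the inner sum:
\[
\sum_{x_b,y_b}\bigl|\,\alpha\,P_1^b(x_b)P_0^b(y_b)-\beta\,P_0^b(x_b)P_1^b(y_b)\,\bigr|\ \ge\ \Bigl|\sum_{x_b,y_b}\bigl(\alpha\,P_1^b(x_b)P_0^b(y_b)-\beta\,P_0^b(x_b)P_1^b(y_b)\bigr)\Bigr|.
\]
The right-hand side collapses, since $P_0^b$ and $P_1^b$ are probability measures and thus each sum to $1$, giving $|\alpha-\beta|=\bigl|P_1^A(x)P_0^A(y)-P_0^A(x)P_1^A(y)\bigr|$. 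Summing this lower bound over $(x,y)\in V_A\times V_A$ and multiplying by $\tfrac12$ yields exactly $F(A)$, establishing $F(A\cup\{b\})\ge F(A)$.

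This argument is essentially the subadditivity of the absolute value, which the authors flag as the source of monotonicity in the paragraph following the statement of \cref{thm:tv-submodular}. I do not anticipate any real obstacle here; the only thing to be careful about is that the marginalization over the new coordinates $(x_b,y_b)$ kills the $b$-factors cleanly because the two sums $\sum_{x_b}P_1^b(x_b)\sum_{y_b}P_0^b(y_b)$ and $\sum_{x_b}P_0^b(x_b)\sum_{y_b}P_1^b(y_b)$ are both equal to $1$, so $\alpha$ and $\beta$ reappear unchanged. The general case $A\subseteq B$ follows by induction on $|B\setminus A|$.
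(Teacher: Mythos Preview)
Your proof is correct and is essentially the same argument as the paper's: both expand $F$ via the $L^1$ formula \eqref{eq:dtv-l1}, apply the triangle inequality (subadditivity of the absolute value), and collapse the extra coordinates using $\sum P_i^{\,\cdot}=1$. The only cosmetic difference is that the paper treats a general inclusion $A\subseteq B$ in one step (marginalizing over all of $V_{B\setminus A}$ at once) rather than reducing to the single-element case and inducting, but the content is identical.
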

	\begin{proof}
    Let $A$ and $B$ be two subsets of $U$ such that $A\subseteq B$. For any $A\subseteq U$, let $P_i^A = \bigtimes_{a\in A} P_i^a$. Using the above notation, we have $P_i^B = P_i^A\times P_i^{B\setminus A}$. 
    By the definition of $F$, we have \begin{align*}
        F(A) ={}& d_{TV}\left(P_1^A\times P_0^A,P_0^A\times  P_1^A \right)\\
	={}& \frac{1}{2}\sum_{x,y\in V_A} \left| P_1^A(x)P_0^A(y) - P_0^A(x)P_1^A(y) \right| \\
	={}& \frac{1}{2}\sum_{x,y\in V_A} \left|\sum_{z,w\in V_{B\setminus A}} P_1^A(x)P_0^A(y)P_1^{B\setminus A}(z)P_0^{B\setminus A}(w) - \sum_{z,w\in V_{B\setminus A}}P_0^A(x)P_1^A(y)P_0^{B\setminus A}(z)P_1^{B\setminus A}(w) \right|\\
	\le{}& \frac{1}{2}\sum_{x,y\in V_A}\sum_{z,w\in V_{B\setminus A}} \left| P_1^A(x)P_0^A(y)P_1^{B\setminus A}(z)P_0^{B\setminus A}(w) - P_0^A(x)P_1^A(y)P_0^{B\setminus A}(z)P_1^{B\setminus A}(w) \right|\\
	={}& d_{TV}(P_1^A\times P_1^{B\setminus A}\times P_0^A \times P_0^{B\setminus A}, P_0^A \times P_0^{B\setminus A}\times P_1^A \times P_1^{B\setminus A})\\
	={}& d_{TV}\left(P_1^B\times P_0^B,P_0^B\times  P_1^B \right)\\
	={}& F(B)
	\,.
    \end{align*}
    where the third equality is because \[\sum_{z,w\in V_{B\setminus A}} P_1^{B\setminus A}(z)P_0^{B\setminus A}(w) = \sum_{z,w\in V_{B\setminus A}} P_0^{B\setminus A}(z)P_1^{B\setminus A}(w) = 1\] and the inequality is a consequence of the triangle inequality.
\end{proof}

\section{Proof of Submodularity in \cref{thm:tv-submodular}}\label{app:submodular}
\begin{proposition}[Submodularity]\label{prop:submodular}
    Under the assumption of \cref{thm:tv-submodular}, the set function $F$ is submodular.
\end{proposition}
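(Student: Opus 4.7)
The plan is to deduce \cref{prop:submodular} as a direct consequence of \cref{lem:general-case}, once the proper identifications are made. Submodularity of $F$ is equivalent to showing that for every $A \subseteq U$ and every pair of distinct $a, b \in U \setminus A$,
\begin{equation*}
F(A \cup \{a, b\}) - F(A \cup \{a\}) - F(A \cup \{b\}) + F(A) \leq 0.
\end{equation*}
So the first step is to rewrite each of these four quantities as a total variation distance whose arguments match the form appearing on the left-hand side of \cref{lem:general-case}.

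To this end, I would set $P = \bigtimes_{c \in A} P_1^c$ and $Q = \bigtimes_{c \in A} P_0^c$, both of which are probability measures on the common space $V_A$. Then $F(A) = d_{TV}(P \times Q, Q \times P)$ by definition. For $F(A \cup \{a\})$, the definition gives
\begin{equation*}
F(A \cup \{a\}) = d_{TV}\bigl((P \times P_1^a) \times (Q \times P_0^a),\ (Q \times P_0^a) \times (P \times P_1^a)\bigr),
\end{equation*}
and by reordering the coordinates via the same bijection on both measures (which preserves total variation distance, since it is just a pushforward under a bijection), this equals $d_{TV}\bigl((P_1^a \times P_0^a) \times P \times Q,\ (P_0^a \times P_1^a) \times Q \times P\bigr)$. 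Completely analogous rearrangements yield the desired form for $F(A \cup \{b\})$ and $F(A \cup \{a,b\})$.

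The second step is the key conceptual observation: although $P_1^a$ and $P_0^a$ need not be involution equivalent as measures on $V_a$, the products $P_1^a \times P_0^a$ and $P_0^a \times P_1^a$ on $V_a \times V_a$ always are, via the transpose involution (\cref{rmk:transpose}). Setting $R = P_1^a \times P_0^a$, $R' = P_0^a \times P_1^a$, $S = P_1^b \times P_0^b$, $S' = P_0^b \times P_1^b$, with $P$ and $Q$ as above, the four rearranged total variations coincide exactly with the four terms on the left-hand side of \cref{lem:general-case}. Applying that lemma immediately yields the submodularity inequality.

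There is no real obstacle at this stage: all the technical work has already been absorbed into \cref{lem:bernoulli} (the Bernoulli case via Bochner's theorem) and its lifting to \cref{lem:general-case} via the involutionary swapping argument. The only nontrivial step in the present proposition is recognizing that pairing $P_i^a$ with its swap $P_{1-i}^a$ into a product automatically produces a pair of involution-equivalent measures, which is precisely what allows a blanket application of \cref{lem:general-case} regardless of the vocabularies $V_a$ and $V_b$.
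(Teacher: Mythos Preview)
Your proposal is correct and follows essentially the same approach as the paper: both identify $P = P_1^A$, $Q = P_0^A$, $R = P_1^a \times P_0^a$, $R' = P_0^a \times P_1^a$, $S = P_1^b \times P_0^b$, $S' = P_0^b \times P_1^b$, invoke the transpose involution from \cref{rmk:transpose} to get $R\sim R'$ and $S\sim S'$, and then apply \cref{lem:general-case}. Your explicit remark that coordinate reordering preserves total variation (since it is a pushforward under a bijection) is a small clarification the paper leaves implicit, but the argument is otherwise identical.
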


\begin{proof}
    To show that $F$ is submodular, we need to check its definition that for any $ A\subseteq U 
	$ and $ a,b\in U\setminus A $ such that $ a\ne b $, it holds that \[ 
	F(A\cup\{a\})+F(A\cup\{b\}) \ge F(A\cup\{a,b\})+F(A)\,,
	 \]
	 If we define $P_i^A = \bigtimes_{a\in A} P_i^a$, the above definition  is equivalent to 
	\begin{align*}
	    & d_{TV}\left(  P_1^a\times P_0^a \times P_1^A\times P_0^A, P_0^a\times P_1^a \times P_0^A\times P_1^A \right)\\
	   & + d_{TV}\left( P_1^b \times P_0^b \times P_1^A\times P_0^A,
	    P_0^b \times P_1^b \times P_0^A\times P_1^A\right)\\
	    \ge{}&
	    d_{TV}\left(  P_1^a\times P_0^a \times P_1^b \times P_0^b \times P_1^A\times P_0^A,
	    P_0^a\times P_1^a \times P_0^b \times P_1^b \times P_0^A\times P_0^A\right)\\
	   & + d_{TV}\left( P_1^A\times P_0^A, P_0^A\times P_1^A \right)\,.
	\end{align*}
	Re-arranging the terms yields 
	\begin{align*}
	  &\  d_{TV}\left(  P_1^a\times P_0^a \times P_1^b \times P_0^b \times P_1^A\times P_0^A,
	    P_0^a\times P_1^a \times P_0^b \times P_1^b \times P_0^A\times P_0^A\right)\\
	   -{}&\ d_{TV}\left(  P_1^a\times P_0^a \times P_1^A\times P_0^A, P_0^a\times P_1^a \times P_0^A\times P_1^A \right)\\
	    -{}&\ d_{TV}\left( P_1^b \times P_0^b \times P_1^A\times P_0^A,
	    P_0^b \times P_1^b \times P_0^A\times P_1^A\right)\\
	    +{}&\ d_{TV}\left( P_1^A\times P_0^A, P_0^A\times P_1^A \right)\le 0\,.
	\end{align*}
	The above inequality follows from \cref{lem:general-case} if we set $P = P_1^A$, $Q = P_0^A$, $R = P_1^a\times P_0^a$, $R' = P_0^a\times P_1^a$, $S = P_1^b\times P_0^b$, and $S' = P_0^b\times P_1^b$. Note that $R\sim R'$ and $S\sim S'$ via the transpose involution (see \cref{rmk:transpose}).
\end{proof}

\end{appendices}

\end{document}